    \let\proof\@undefined
    \let\endproof\@undefined
    \pgfplotsset{compat=1.13}
    \let\originalleft\left
    \let\originalright\right
    \renewcommand{\left}{\mathopen{}\mathclose\bgroup\originalleft}
    \renewcommand{\right}{\aftergroup\egroup\originalright}
    \newcounter{thm} 
    \newtheorem{theorem}[thm]{\indent Theorem}
    \newtheorem{assumption}{\indent Assumption}
    \newtheorem{proposition}{\indent Proposition}
    \newtheorem{lemma}{\indent Lemma}
    \newtheorem{corollary}{\indent Corollary}
    \newtheorem{definition}{\indent Definition}
    \newtheorem{example}{\indent Example}
    \newtheorem{Simulation}{Simulation}
    \newtheorem{fact}{\indent Fact}
    \newtheorem{conjecture}{\indent Conjecture}
    \newtheorem{experiment}{\indent Experiment}
    \renewcommand{\theenumi}{{\it (\alph{enumi})}}
    \renewcommand{\labelenumi}{\theenumi}
    \newlength\figureheight 
    \newlength\figurewidth
    \DeclareMathAlphabet{\mathcal}{OMS}{cmsy}{m}{n} 
    \crefname{equation}{}{}
    \newcommand{\bd}[0]{\mbox{bd }}
\begin{document}

\title{Safe Navigation in Unmapped Environments for Robotic Systems with Input Constraints}

\author{Amirsaeid Safari and Jesse B. Hoagg
\thanks{A. Safari and J. B. Hoagg are with the Department of Mechanical and Aerospace Engineering, University of Kentucky, Lexington, KY, USA. (e-mail: amirsaeid.safari@uky.edu, jesse.hoagg@uky.edu).}
\thanks{This work is supported in part by the National Science Foundation (1849213) and Air Force Office of Scientific Research (FA9550-20-1-0028).}
}
\maketitle

\begin{abstract}
This paper presents an approach for navigation and control in unmapped environments under input and state constraints using a composite control barrier function (CBF). 
We consider the scenario where real-time perception feedback (e.g., LiDAR) is used online to construct a local CBF that models local state constraints (e.g., local safety constraints such as obstacles) in the \textit{a priori} unmapped environment.  
The approach employs a soft-maximum function to synthesize a single time-varying CBF from the $N$ most recently obtained local CBFs.
Next, the input constraints are transformed into controller-state constraints through the use of control dynamics.
Then, we use a soft-minimum function to compose the input constraints with the time-varying CBF that models the \textit{a priori} unmapped environment.
This composition yields a single relaxed CBF, which is used in a constrained optimization to obtain an optimal control that satisfies the state and input constraints. 
The approach is validated through simulations of a nonholonomic ground robot that is equipped with LiDAR and navigates an unmapped environment. 
The robot successfully navigates the environment while avoiding the \textit{a priori} unmapped obstacles and satisfying both speed and input constraints. 
\end{abstract}




\section{Introduction}
Safe autonomous navigation in unknown or changing environments is a critical challenge in robotics with applications ranging from search and rescue~\cite{hudson2021heterogeneous} to environmental monitoring~\cite{kress2009temporal} and transportation~\cite{schwarting2018planning}. 
Control barrier functions (CBFs) are a tool for ensuring state constraints (e.g., safety) in robotic systems by providing a set-theoretic method to obtain forward invariance of a specified safe set~(e.g., \cite{wieland2007constructive,ames2019control,ames2016control}). 
However, effective application of CBFs in real-world scenarios faces several challenges, including: (1) online construction of CBF for unknown environments, and (2) implementation of CBFs under input constraints (e.g. actuator limits).

Regarding the first challenge, barrier function approaches often assume that the barrier functions are constructed offline using \textit{a priori} knowledge of the environment~\cite{ames2019control}. 
Online methods have been used to synthesize CBFs from sensor data, including support-vector-machines approaches~\cite{srinivasan2020synthesis} and Gaussian-process approaches~\cite{khan2022gaussian}.
However, when new sensor data is obtained, the barrier function model must be updated, which often results in discontinuities that can be problematic for ensuring forward invariance and thus, state constraint satisfaction.

Nonsmooth barrier functions \cite{glotfelter2017nonsmooth} and hybrid nonsmooth barrier functions \cite{glotfelter2019hybrid} can be used to partially address the challenges that arise from updating barrier functions in real time. 
However, \cite{glotfelter2017nonsmooth, glotfelter2019hybrid} are not applicable for relative degree greater than one. 
Thus, these approaches cannot be directly applied to ground robots with nonnegligible inertia or unmanned aerial vehicles with position constraints (e.g., obstacles).

For systems with arbitrary relative degrees, \cite{safari2024TSCT} uses a smooth time-varying construction to address online updating of barrier functions. 
In particular, \cite{safari2024TSCT} considers scenarios where real-time local perception data can be used to construct a local barrier function that models local state constraints. 
Then, \cite{safari2024TSCT} uses a smooth time-varying soft-maximum construction to compose the $N$ most recently obtained local barrier functions into a single barrier function whose zero-superlevel set approximates the union of the $N$ most recently obtained local subsets. 
This smooth time-varying barrier function is used to construct controls that guarantee satisfaction of state constraints based on online perception information for systems with arbitrary relative degree. 
However, \cite{safari2024TSCT} does not address input constraints.


Input constraints (e.g., actuator limits) often present a challenge in constructing a valid CBF. 
Specifically, verifying a candidate CBF under input constraints can be challenging and this complexity is further exacerbated in multi-objective safety scenarios, where multiple safety constraints must be satisfied simultaneously. 
Offline approaches using sum-of-squares optimization \cite{wang2018} or state space gridding have been proposed\cite{tan2022compatibility}, but these methods may lack real-time applicability. 
An alternative online approach to obtain forward invariance subject to input constraints is to use a prediction of the system trajectory under a backup control \cite{backupautomatic,gurriet2020,chen2020}. 
However, \cite{backupautomatic,gurriet2020,chen2020} all rely on a prediction of the system trajectories into the future.


A different approach to addressing both state and input constraints is presented in \cite{compositionACC}, which uses smooth soft-minimum and soft-maximum functions to compose multiple barrier functions into a single barrier function.
Smooth compositions are also considered in \cite{lindemann2018control}. 
In \cite{compositionACC}, control dynamics are introduced where the control signal is expressed as an algebraic function of the controller state, allowing input constraints to be treated as additional barrier functions in the state of the controller. 
Then, the smooth composition is used to compose all input and state constraints into a single barrier function. 
However, \cite{compositionACC} does not consider the challenges of \textit{a priori} unmapped environments.

This article presents an approach that simultaneously addresses input constraints and online barrier function construction in \textit{a priori} unmapped environments. 
The main contribution of this article is a control method that allows for safe navigation in unmapped environments while satisfying both state and input constraints.
The approach leverages the time-varying soft-maximum barrier function from \cite{safari2024ACC, safari2024TSCT} to compose the $N$ most recently obtained local barrier function, which are constructed from local online perception data.
Next, we adopt ideas from \cite{compositionACC} to transform the input constraints into controller-state constraints through the use of control dynamics.
Then, we use a soft-minimum function to compose the input constraints with the time-varying CBF that models the \textit{a priori} unmapped environment.
This composition yields a single relaxed CBF that is used in a constrained optimization, which is solved in closed form to obtain a feedback control that guarantees state and input constraint satisfaction. 
This closed-form optimal feedback control ensures safety in an \textit{a priori} unknown environment (e.g., avoiding obstacles), while satisfying other known state constraints (e.g., speed limits) as well as input constraints (e.g., actuator limits). 
The approach is validated through simulations of a nonholonomic ground robot that is equipped with LiDAR and navigates an unmapped environment. 
The robot successfully navigates the environment while avoiding the \textit{a priori} unmapped obstacles and satisfying both speed and input constraints.



\section{Notation}

The interior, boundary, and closure of the set $\mathcal{A} \subseteq \mathbb{R}^n$ are denoted by $\mbox{int}~\mathcal{A}$, $\mbox{bd}~\mathcal{A}$, and $\mbox{cl}~\mathcal{A}$ respectively.
Let $\mathbb{N} \triangleq \{ 0, 1, 2, \ldots \}$, and let $\| \cdot \|$ denote the $2$ norm on $\BBR^n$.

Let $\zeta:[0,\infty) \times \BBR^n \to \BBR$ be continuously differentiable. 
Then, the partial Lie derivative of $\zeta$ with respect to $x$ along the vector fields of $\nu:\mathbb{R}^n \to \mathbb{R}^{n \times \ell}$ is defined as 
\begin{equation*}
L_\nu \zeta(t,x) \triangleq \frac{\partial \zeta(t,x)}{\partial x} \nu(x).
\end{equation*}
In this paper, we assume that all functions are sufficiently smooth such that all derivatives that we write exist and are continuous.

A continuous function $a \colon \BBR \to \BBR$ is an \textit{extended class-$\SK$ function} if it is strictly increasing and $a(0)=0$.

Let $\kappa>0$, and consider $\mbox{softmin}_\kappa \colon \mathbb{R}^N \to \mathbb{R}$ and $\mbox{softmax}_\kappa \colon \mathbb{R}^N \to \mathbb{R}$ defined by
\begin{gather}
\mbox{softmin}_\kappa (z_1,\cdots,z_N) \triangleq -\frac{1}{\kappa}\log\sum_{i=1}^Ne^{-\kappa z_i},\label{eq:softmin}\\
\mbox{softmax}_\kappa (z_1,\cdots,z_N) \triangleq \frac{1}{\kappa}\log\sum_{i=1}^Ne^{\kappa z_i} - \frac{\log N }{\kappa},\label{eq:softmax}
\end{gather}
which are the log-sum-exponential \textit{soft minimum} and \textit{soft maximum}, respectively. 
The next result relates the soft minimum to the minimum and the soft maximum to the maximum. 
See \cite{safari2024TSCT} for a proof.

\begin{proposition}
\label{fact:softmin_limit}
\rm
Let $z_1,\cdots, z_N \in \mathbb{R}$. 
Then,
\begin{align}
  \min \, \{z_1,\cdots,z_N\} - \frac{\log N }{\kappa} 
    &\le \mbox{softmin}_\kappa(z_1,\cdots,z_N) \nn \\
    &\le \min \, \{z_1,\cdots,z_N\}\label{eq:softmin_inequality},
\end{align}
and
\begin{align}
    \max\,\{z_1,\cdots,z_N\}  - \frac{\log N}{\kappa}
 &\le \mbox{softmax}_\kappa(z_1,\cdots,z_N) \nn \\
 &\le \max \, \{z_1,\cdots,z_N\}\label{eq:softmax_inequality}.
\end{align}
\end{proposition}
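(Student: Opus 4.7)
The plan is to bound the sum $\sum_{i=1}^{N} e^{-\kappa z_i}$ (respectively $\sum_{i=1}^{N} e^{\kappa z_i}$) above and below by multiples of the single largest term, and then translate these bounds through the monotone map $t\mapsto -\tfrac{1}{\kappa}\log t$ (respectively $t\mapsto \tfrac{1}{\kappa}\log t$) to recover inequalities on the soft minimum (respectively soft maximum). The two halves of the proposition are completely symmetric, so I would prove the softmin inequality \eqref{eq:softmin_inequality} in detail and note that \eqref{eq:softmax_inequality} follows by the same argument applied to $(-z_1,\ldots,-z_N)$, using the identity $\mbox{softmax}_\kappa(z_1,\ldots,z_N) = -\mbox{softmin}_\kappa(-z_1,\ldots,-z_N) - \tfrac{\log N}{\kappa}$.

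For the softmin bound, let $m \triangleq \min\{z_1,\ldots,z_N\}$ and pick an index $j$ with $z_j = m$. The key two-sided estimate is
\begin{equation*}
e^{-\kappa m} \;\le\; \sum_{i=1}^{N} e^{-\kappa z_i} \;\le\; N e^{-\kappa m},
\end{equation*}
where the lower bound holds because the $j$-th summand alone equals $e^{-\kappa m}$ and all terms are positive, and the upper bound holds because $z_i \ge m$ implies $e^{-\kappa z_i} \le e^{-\kappa m}$ for every $i$. Taking $\log$ and multiplying by $-1/\kappa$ (which reverses inequalities since $\kappa>0$) yields
\begin{equation*}
m - \frac{\log N}{\kappa} \;\le\; -\frac{1}{\kappa}\log\sum_{i=1}^{N} e^{-\kappa z_i} \;\le\; m,
\end{equation*}
which is exactly \eqref{eq:softmin_inequality} in view of definition \eqref{eq:softmin}.

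For \eqref{eq:softmax_inequality}, one option is to redo the same argument with $M \triangleq \max\{z_1,\ldots,z_N\}$, bounding $e^{\kappa M} \le \sum_i e^{\kappa z_i} \le N e^{\kappa M}$, taking $\log$, dividing by $\kappa$, and finally subtracting $\tfrac{\log N}{\kappa}$ as required by definition \eqref{eq:softmax}. Alternatively, substituting $z_i \mapsto -z_i$ in the softmin bounds and using $\min\{-z_i\} = -\max\{z_i\}$ gives \eqref{eq:softmax_inequality} directly from \eqref{eq:softmin_inequality}.

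There is no real obstacle here: the proof is essentially a one-line sandwich on the exponential sum followed by a monotone transformation. The only care needed is tracking the sign flip introduced by $-1/\kappa$ and correctly absorbing the $\log N/\kappa$ offset in the softmax definition; both are bookkeeping rather than substantive difficulties.
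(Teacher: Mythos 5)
Your proof is correct, and the sandwich bound $e^{-\kappa m} \le \sum_i e^{-\kappa z_i} \le N e^{-\kappa m}$ followed by the monotone log transformation is the standard (and essentially unique) argument for this elementary fact; the paper defers to the cited reference for the proof, which uses the same approach. Both your direct computation for the softmax half and the reduction via the identity $\mbox{softmax}_\kappa(z) = -\mbox{softmin}_\kappa(-z) - \tfrac{\log N}{\kappa}$ are valid.
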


\Cref{fact:softmin_limit} shows that $\mbox{softmin}_\kappa$ and $\mbox{softmax}_\kappa$ lower bound minimum and maximum, respectively.
\Cref{fact:softmin_limit} also shows that $\mbox{softmin}_\kappa$ and $\mbox{softmax}_\kappa$ approximate the minimum and maximum in the sense that they converge to minimum and maximum, respectively, as $\kappa \to \infty$.

The next result is a consequence of \Cref{fact:softmin_limit}.
The result shows that soft minimum and soft maximum can be used to approximate the intersection and the union of zero-superlevel sets, respectively.
See \cite{safari2024TSCT} for a proof.

\begin{proposition}
\label{prop:softmin_softmax_sets}
\rm
For $i \in \{ 1,2,\ldots,N\}$, let $\zeta_i \colon \BBR^n \to \BBR$ be continuous, and define 
\begin{gather}
\SD_i \triangleq \{ x \in \BBR^n \colon \zeta_i(x) \ge 0 \}\label{prop2.1},\\
\SX_{\kappa} \triangleq \{ x \in \BBR^n \colon \mbox{softmin}_\kappa(\zeta_1(x),\cdots,\zeta_N(x)) \ge 0 \}\label{prop2.2},\\
\SY_{\kappa} \triangleq \{ x \in \BBR^n \colon \mbox{softmax}_\kappa(\zeta_1(x),\cdots,\zeta_N(x)) \ge 0 \}\label{prop2.3}.
\end{gather}
Then, 
\begin{equation*}
    \SX_{\kappa} \subseteq \bigcap_{i=1}^N \SD_i, \qquad \SY_{\kappa} \subseteq \bigcup_{i=1}^N \SD_i.
\end{equation*}
Furthermore, as $\kappa \to \infty$, $\SX_{\kappa} \to \bigcap_{i=1}^N \SD_i$ and $\SY_{\kappa} \to \bigcup_{i=1}^N \SD_i$.
\end{proposition}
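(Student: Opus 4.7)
The plan is to derive both the containment statements and the convergence statements directly from the pointwise inequalities in \Cref{fact:softmin_limit}, applied with $z_i = \zeta_i(x)$ at each fixed $x \in \BBR^n$.

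For the two containments, I would start from the upper bounds in \eqref{eq:softmin_inequality} and \eqref{eq:softmax_inequality}. If $x \in \SX_\kappa$, then $\mbox{softmin}_\kappa(\zeta_1(x),\dots,\zeta_N(x)) \ge 0$, and the upper bound in \eqref{eq:softmin_inequality} gives $\min_i \zeta_i(x) \ge 0$, which is exactly $\zeta_i(x) \ge 0$ for every $i$, i.e.\ $x \in \bigcap_{i=1}^N \SD_i$. The argument for $\SY_\kappa$ is symmetric: if $x \in \SY_\kappa$, the upper bound in \eqref{eq:softmax_inequality} yields $\max_i \zeta_i(x) \ge 0$, so $\zeta_j(x) \ge 0$ for at least one $j$, and $x \in \bigcup_{i=1}^N \SD_i$. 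Each step is a one-line consequence of \Cref{fact:softmin_limit}.

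For the convergence statements, I would exploit the lower bounds in \eqref{eq:softmin_inequality} and \eqref{eq:softmax_inequality}, whose gap $\log N / \kappa$ tends to $0$. Combined with the upper bounds, one has for every $x$ that $\mbox{softmin}_\kappa(\zeta_1(x),\dots,\zeta_N(x)) \to \min_i \zeta_i(x)$ and $\mbox{softmax}_\kappa(\zeta_1(x),\dots,\zeta_N(x)) \to \max_i \zeta_i(x)$ as $\kappa \to \infty$. Thus for any $x$ with $\min_i \zeta_i(x) > 0$ (respectively $\max_i \zeta_i(x) > 0$), there exists $\kappa^\star$ such that $x \in \SX_\kappa$ (resp.\ $x \in \SY_\kappa$) for all $\kappa \ge \kappa^\star$. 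Together with the containments $\SX_\kappa \subseteq \bigcap_i \SD_i$ and $\SY_\kappa \subseteq \bigcup_i \SD_i$ that hold for every $\kappa$, this sandwiches the limit sets between the interiors and the closures of the target intersection and union, which is the precise sense in which $\SX_\kappa \to \bigcap_i \SD_i$ and $\SY_\kappa \to \bigcup_i \SD_i$.

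I do not expect a serious obstacle here, because the heavy lifting is already done in \Cref{fact:softmin_limit}; the only subtlety is making clear what ``$\SX_\kappa \to \bigcap_i \SD_i$'' actually means. I would state it as a monotone-in-$\kappa$ Painlev\'{e}--Kuratowski-style convergence and verify that any interior point of $\bigcap_i \SD_i$ eventually lies in $\SX_\kappa$, while $\SX_\kappa$ never exceeds $\bigcap_i \SD_i$. The analogous argument for $\SY_\kappa$ and $\bigcup_i \SD_i$ is identical with $\min$ replaced by $\max$ and softmin replaced by softmax.
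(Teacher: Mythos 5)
Your proof is correct and matches the approach the paper itself indicates: the paper frames \Cref{prop:softmin_softmax_sets} as a consequence of \Cref{fact:softmin_limit} and defers the written-out argument to the cited reference, and you derive both the containments (via the upper bounds in \Cref{eq:softmin_inequality,eq:softmax_inequality}) and the convergence (via the lower bounds, whose $\log N/\kappa$ gap vanishes) precisely by applying those inequalities pointwise with $z_i = \zeta_i(x)$. Your remark that the unqualified claim ``$\SX_\kappa \to \bigcap_{i=1}^N\SD_i$'' needs a precise set-convergence meaning is a fair caveat that the paper's statement leaves unresolved as well.
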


\section{Problem Formulation}\label{sec:problem formulation}

Consider
\begin{equation}\label{eq:affine control}
\dot x(t) = f(x(t))+g(x(t)) u(t), 
\end{equation}
where $x(t) \in \mathbb{R}^n $ is the state, $x(0) = x_0 \in \mathbb{R}^n$ is the initial condition, $f: \mathbb{R}^n \to \mathbb{R}^n$ and $g: \mathbb{R}^n \to \mathbb{R}^{n \times m}$ are locally Lipschitz continuous on $\BBR^n$, $u:[0, \infty) \to \SU$ is the control, and the set of admissible controls is 
\begin{equation}\label{eq:U_def}
    \SU \triangleq \{ u\in \BBR^m: \phi_1(u) \ge 0, \ldots,  \phi_{\ell}( u) \ge 0\} \subseteq \BBR^m, 
\end{equation}
where $\phi_1,\ldots,\phi_{\ell}: \BBR^m \to \BBR$ are continuously differentiable. 
We assume that for all $u \in \SU$, $\phi^\prime_1(u) \neq 0,\ldots,\phi^\prime_\ell(u) \neq 0$.




Next, let  $\xi_1, \ldots, \xi_{p} : \BBR^n \to \BBR$ be continuously differentiable, and define the set of \textit{known state constraints}
\begin{equation} \label{eq:S_varphi}
    \SSS_\xi \triangleq \{ x \in \BBR^n \colon \xi_1(x) \geq 0, \ldots, \xi_{p}(x) \geq 0 \},
\end{equation}
which is the set of states that satisfy \textit{a priori} known constraints.
For example, $\SSS_\xi$ could be the the set of states that satisfy \textit{a priori} known limits on the velocity of an uncrewed air vehicle or a ground robot. 
We make the following assumption:
\begin{enumerate}[leftmargin=0.9cm]
	\renewcommand{\labelenumi}{(A\arabic{enumi})}
	\renewcommand{\theenumi}{(A\arabic{enumi})}

\item\label{con1_varphi}
There exists a positive integer $d$ such that for all $x \in \BBR^n$ and all $j \in \{ 1,2,\ldots,p\}$, $L_g \xi_j(x)=L_gL_f \xi_j(x)=\cdots=L_gL_f^{d-2} \xi_j(x)=0$; and for all $x \in \SSS_\xi$, $L_g L_f^{d-1} \xi_j(x) \neq 0$.
\end{enumerate}

Assumption \ref{con1_varphi} implies $\xi_{j}$ has relative degree $d$ on $\SSS_\xi$. 
For simplicity, we assume that the relative degree $d$ is the same for $\xi_1,\ldots,\xi_{p}$; however, this assumption is not needed (see \cite{compositionACC}).

For all $t \ge 0$, the set of \textit{unknown state constraints} is denoted $\SSS_{\rm{u}}(t) \subset \BBR^n$, which is the set of states that satify \textit{a priori} unknown constraints at time $t$. 
For example, $\SSS_{\rm{u}}(t)$ could be the the set of states where a robot is not in collision with any obstacles, where the environment is unmapped. 
We emphasize that $\SSS_{\rm{u}}(t)$ is not assumed to be known \textit{a priori}. 

Next, we define the set of all state constraints
\begin{equation} \label{eq:S_s}
    \SSS_\rms(t) \triangleq \SSS_\rmu(t) \cap \SSS_\xi,
\end{equation}
which is the set of states that are in the known set $\SSS_\xi$ and the unknown set $\SSS_\rmu(t)$.

Since $\SSS_{\rm{u}}(t)$ is not assumed to be known \textit{a priori}, we assume that a real-time sensing system provides a subset of the $\SSS_{\rm{u}}(t)$ at update times $0,T,2T,3T,\ldots$, where $T > 0$.
Specifically, for all $k \in \BBN$, we obtain perception feedback at time $kT$ in the form of a continuously differentiable function $b_k : \mathbb{R}^n \to \mathbb{R}$ such that its zero-superlevel set
\begin{equation}\label{eq:Sk}
\SSS_{k} \triangleq \{x \in \mathbb{R}^n : b_k(x) \ge 0\},
\end{equation}
is nonempty, contains no isolated points, and is a subset of $\SSS_\rmu(kT)$.
We note that different approaches (e.g., \cite{srinivasan2020synthesis, khan2022gaussian,safari2024ACC,safari2024TSCT}) can be used to synthesize the perception feedback function $b_k$. 
The example in \Cref{section:ground robot} uses the simple approach in \cite{safari2024ACC,safari2024TSCT} to synthesize $b_k$ from LiDAR using the soft minimum.

The perception feedback function $b_k$ is required to satify several conditions.
Specifically, we assume that there exist known positive integers $N$ and $r$ such that for all $k \in \mathbb{N}$, the following hold:

\begin{enumerate}[leftmargin=0.9cm]
	\renewcommand{\labelenumi}{(A\arabic{enumi})}
	\renewcommand{\theenumi}{(A\arabic{enumi})}
 \setcounter{enumi}{1}

\item\label{con1}
For all $t \in [kT, (k+N+1)T]$, $\SSS_{k} \subseteq \SSS_{\rm{u}}(t)$.

\item\label{con3}
For all $x \in \BBR^n$, $L_gb_k(x)=L_gL_fb_k(x)=\cdots=L_gL_f^{r-2}b_k(x)=0$.

\item\label{con4}
For almost all $x \in \cup_{i=k}^{k+N} \SSS_{i}$, $L_gL_f^{r-1}b_k(x) \ne 0$.

\item\label{con2}
$\SSS_{k} \cap \left ( \cup_{i=k+1}^{k+N} \SSS_i \right )$ is nonempty and contains no isolated points.

\end{enumerate}

Assumption~\ref{con1} implies that $\SSS_k$ is a subset of the unknown set $\SSS_\rmu(t)$ for $(N+1)T$ time units into the future. 
For example, if $N=1$, then \ref{con1} implies that $\SSS_k$ is a subset of $\SSS_\rmu(t)$ over the interval $[kT,(k+2)T]$. 
The choice $N=1$ is appropriate if $\SSS_\rmu(t)$ is changing quickly. 
On the other hand, if $\SSS_\rmu(t)$ is changing more slowly, then $N>1$ may be appropriate. 
If $\SSS_\rmu(t)$ is time-invariant, then \ref{con1} is satisfied for any positive integer $N$ because $\SSS_k \subseteq \SSS_\rmu$. 

Assumptions~\ref{con3} and~\ref{con4} implies that $b_k$ has relative degree $r$ with respect to \eqref{eq:affine control} on $\cup_{i=k}^{k+N} \SSS_{i}$.
Assumption~\ref{con2} is a technical condition on the perception data that the zero-superlevel set of $b_{k}$ is connected to the union of the zero-superlevel sets of $b_{k-1},\ldots,b_{k-N}$.

Next, let $u_{\rm{d}} : [0, \infty) \times \mathbb{R}^n \to \mathbb{R}^m$ be the \textit{desired control}, which is designed to meet performance requirements but may not account for the state or input constraints.
The objective is to design a full-state feedback control that is as close as possible to $u_{\rm{d}}$ while satisfying the state constraints (i.e., $x(t) \in  \SSS_\rms(t)$) and the input constraints (i.e., $u(t) \in  \SU$). 
Specifically, the objective is to design a full-state feedback control such that the following objectives are satisfied: 
\begin{enumerate}[leftmargin=0.9cm]
	\renewcommand{\labelenumi}{(O\arabic{enumi})}
	\renewcommand{\theenumi}{(O\arabic{enumi})}

\item\label{obj1}
For all $t \geq 0$, $x(t) \in  \SSS_\rms(t)$. 

\item\label{obj2}
For all $t \geq 0$, $u(t) \in  \SU$. 

\item\label{obj3}
For all $t \geq 0$, $\| u(t) - u_{\rm{d}}(t, x) \|$ is small.

\end{enumerate}

All statements in this paper that involve the subscript $k$ are for all $k \in \mathbb{N}$.

\section{Time-Varying Perception Barrier Function} 
\label{sec:Method}

This section is based on the approach in \cite{safari2024TSCT} for constructing a time-varying barrier function from the real-time perception feedback $b_k$. 


Let $\eta:\mathbb{R} \to [0,1]$ be $r+1$-times continuously differentiable such that the following conditions hold:

\begin{enumerate}[leftmargin=0.9cm]
	\renewcommand{\labelenumi}{(C\arabic{enumi})}
	\renewcommand{\theenumi}{(C\arabic{enumi})}

 \item\label{con:con1_g}
For all $t\in (-\infty,0]$, $\eta(t) = 0$.

\item\label{con:con2_g}
For all $t\in [1,\infty)$, $\eta(t) = 1$.

\item\label{con: con4_g}
For all $i \in \{1,\cdots,r+1\}$, $\left . {\frac{{\rm d}^i \eta(t)}{{\rm d}t^i}}\right |_{t=0} = 0$ and $\left . {\frac{{\rm d}^i \eta(t)}{{\rm d}t^i}} \right |_{t=1} = 0$. 
\end{enumerate}

\begin{figure}[t!]
\center{\includegraphics[width=0.47\textwidth,clip=true,trim= 0.42in 0.25in 1.1in 0.6in] {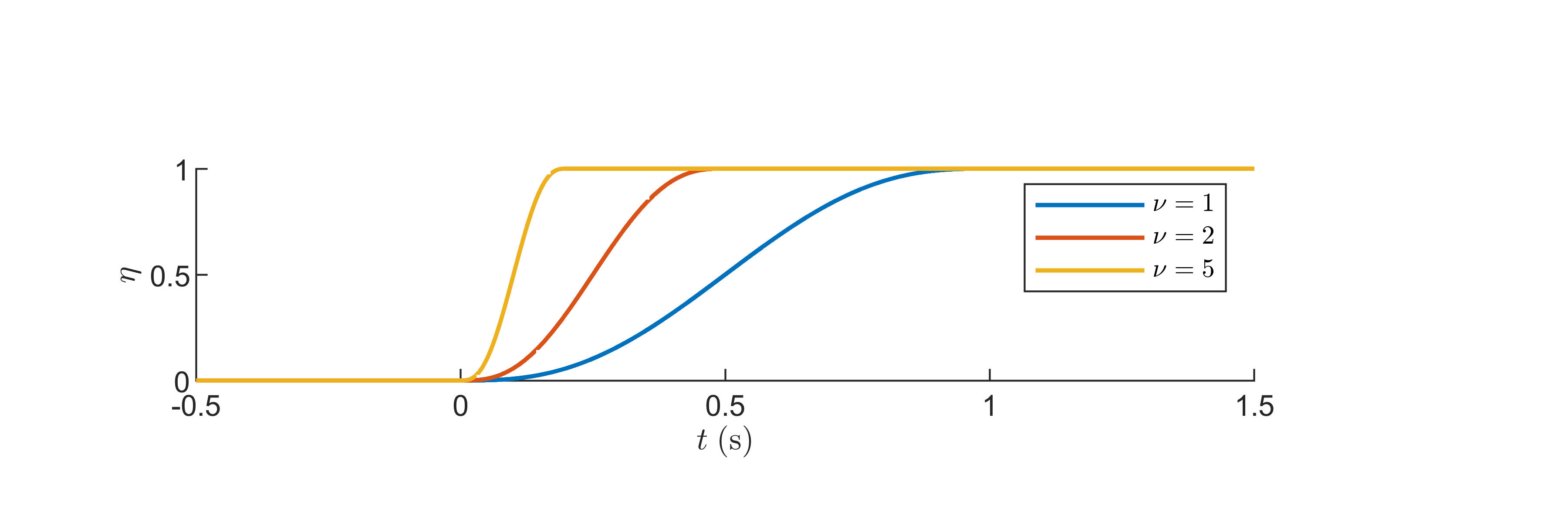}}
\caption{$\eta$ given by Example~\ref{ex:g} with $r=2$.}\label{fig:eta}
\end{figure} 

The following example provides a choice for $\eta$ that satisfies \ref{con:con1_g}--\ref{con: con4_g} for any positive integer $r$.

\begin{example}\label{ex:g}\rm
Let $\nu \ge 1$, and let
\begin{equation}\label{eq:smoothstep}
\eta(t) =
\begin{cases}
        0, & t < 0, \\
        \left(\nu t\right)^{r+2}  \sum_{j=0}^{r+1} \binom{r+1+j}{j}\binom{2r+3}{r+1-j}(-\nu t)^j, & t \in \left [ 0,\frac{1}{\nu} \right ], \\
        1, & t > \frac{1}{\nu},
\end{cases}
\end{equation}
which satisfies \ref{con:con1_g}--\ref{con: con4_g}.
Note that $\nu$ is a tuning parameter, which influences the rate at which $\eta$ transitions from zero to one. 
\Cref{fig:eta} is a plot of $\eta$ for $r=2$. 
\end{example}

Let $\kappa > 0$, and consider $\psi_0:[0, \infty) \times \mathbb{R}^n  \to \mathbb{R}$ such that for all $k\in\BBN$ and all $(t,x) \in [kT, (k+1)T) \times \BBR^n$,
\begin{align}\label{eq:softmax_h}
\psi_0(t,x) &\triangleq \mbox{softmax}_\kappa \bigg (b_{k-1}(x), \cdots, b_{k-N+1}(x),\nonumber\\
&\qquad \eta\left(\textstyle\frac{t}{T}-k\right)b_k(x)  + \left[ 1-\eta\left(\textstyle\frac{t}{T}-k\right)\right]b_{k-N}(x) \bigg ),
\end{align}
where for $i\in \{1,2,\ldots,N\},$ $b_{-i}\triangleq b_0$.
The function $\psi_0$ is constructed from the $N$ most recently obtained perception feedback functions $b_k,\ldots,b_{k-N}$. 
If $N=1$, then 
\begin{equation}
\psi_0(t,x) = \eta\left(\textstyle\frac{t}{T}-k\right)b_k(x)  + \left[ 1-\eta\left(\textstyle\frac{t}{T}-k\right)\right ] b_{k-1}(x).    \label{eq:psi0_N=1}
\end{equation}
In this case, $\psi_0$ is a convex combination of $b_k$ and $b_{k-1}$, where $\psi_0$ smoothly transitions from $b_{k-1}$ to $b_{k}$ over the interval $[kT,(k+1)T]$. 
The final argument of \eqref{eq:softmax_h} involves the convex combination of $b_k$ and $b_{k-N}$, which allows for the smooth transition from $b_{k-N}$ to $b_{k}$ over the interval $[kT,(k+1)T]$. 
This convex combination is the mechanism by which the newest perception feedback $b_{k}$ is smoothly incorporated into $\psi_0$ and the oldest perception feedback $b_{k-N}$ is smoothly removed from $\psi_0$.

The zero-superlevel set of $\psi_0$ is defined by
\begin{equation} \label{eq:safe set final}
    \SSS_\psi(t) \triangleq \{x \in \mathbb{R}^n \colon \psi_0(t,x) \geq 0 \}.
\end{equation}
\Cref{prop:softmin_softmax_sets} implies that at sample times, $\SSS_\psi(t)$ is a subset of the union of the zero-superlevel sets of $b_{k-1},\ldots,b_{k-N}$.
%
%
%
\Cref{prop:softmin_softmax_sets} also implies that for sufficiently large $\kappa>0$, $\SSS_\psi(t)$ approximates the union of the zero-superlevel sets of $b_{k-1},\ldots,b_{k-N}$.
In other words, if $\kappa >0$ is sufficiently large, then $\psi_0$ is a lower-bound approximation of 
\begin{align*}
\psi_*(t,x) &\triangleq \max \, \{ b_{k-1}(x), \cdots, b_{k-N+1}(x),  \nn\\
&\qquad \eta\left(\textstyle\frac{t}{T}-k\right)b_k(x)  + \left[ 1-\eta\left(\textstyle\frac{t}{T}-k\right)\right]b_{k-N}(x)\}.
\end{align*}
However, if $\kappa >0$ is large, then $\textstyle\frac{\partial \psi_0(t,x)}{\partial x}$ has large magnitude at points where $\psi_*$ is not differentiable. 
Thus, choice of $\kappa$ is a trade off between the magnitude of $\textstyle\frac{\partial \psi_0(t,x)}{\partial x}$ and the how well $\SSS_\psi(t)$ approximates the zero-superlevel set of $\psi_*$.

The convex combination of $b_k$ and $b_{k-N}$ used in \eqref{eq:softmax_h} ensures that $\SSS_\psi(t)$ is a subset of $\SSS_{\rm{u}}(t)$ not only at sample times but also for all time between samples. 
The following result demonstrates this property. 
See \cite[Prop. 5]{safari2024TSCT} for a proof. 

\begin{proposition}\label{fact:S(t)}\rm
Assume \ref{con1} is satisfied. 
Then, for all $k \in \BBN$ and all $t \in [kT, (k+1)T]$, $\SSS_\psi(t) \subseteq  \cup_{i=k-N}^k \SSS_i  \subseteq \SSS_{\rm{u}}(t)$. 
\end{proposition}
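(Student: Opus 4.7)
The plan is to establish the two inclusions in the conclusion separately. The first, $\SSS_\psi(t) \subseteq \cup_{i=k-N}^{k}\SSS_i$, follows from \Cref{prop:softmin_softmax_sets} applied to the softmax defining $\psi_0$ in \eqref{eq:softmax_h}, together with a short observation about when a convex combination of two reals is non-negative. The second, $\cup_{i=k-N}^{k}\SSS_i \subseteq \SSS_{\rm{u}}(t)$, is an immediate consequence of \ref{con1}.

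For the first inclusion, fix $k$ and $t \in [kT,(k+1)T]$, and let $x \in \SSS_\psi(t)$, so that $\psi_0(t,x) \ge 0$. Applying \Cref{prop:softmin_softmax_sets} to the $N$-argument softmax in \eqref{eq:softmax_h} shows that the zero-superlevel set of $\psi_0(t,\cdot)$ is contained in the union of the zero-superlevel sets of its $N$ arguments. If one of the first $N-1$ arguments $b_{k-1}(x),\ldots,b_{k-N+1}(x)$ is non-negative, then $x \in \SSS_{k-j}$ for the corresponding $j \in \{1,\ldots,N-1\}$, and the claim is immediate. Otherwise the last argument is non-negative, i.e., $\alpha b_k(x) + (1-\alpha)b_{k-N}(x) \ge 0$ with $\alpha \triangleq \eta(t/T - k) \in [0,1]$. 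Since a convex combination of two reals can be non-negative only if at least one of them is, I conclude $b_k(x) \ge 0$ or $b_{k-N}(x) \ge 0$, hence $x \in \SSS_k \cup \SSS_{k-N} \subseteq \cup_{i=k-N}^{k}\SSS_i$.

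For the second inclusion, fix any $i \in \{k-N,\ldots,k\}$. \ref{con1} applied at index $i$ gives $\SSS_i \subseteq \SSS_{\rm{u}}(s)$ for every $s \in [iT,(i+N+1)T]$. Since $i \le k$ implies $iT \le kT \le t$, and $i \ge k-N$ implies $(i+N+1)T \ge (k+1)T \ge t$, the instant $t$ lies in this interval, so $\SSS_i \subseteq \SSS_{\rm{u}}(t)$; taking the union over $i$ completes the argument. The only step with any real content is the convex-combination observation in the first inclusion: it is precisely this step that turns the single mixed last argument of the softmax into the pair of superlevel sets $\SSS_k$ and $\SSS_{k-N}$, thereby extending the index range of the union from $\{k-N+1,\ldots,k-1\}$ to the full $\{k-N,\ldots,k\}$. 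Everything else is a routine check of index ranges.
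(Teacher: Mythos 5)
Your proof is correct, and it takes the natural route suggested by the tools the paper itself provides: Proposition~\ref{prop:softmin_softmax_sets} handles the softmax, the convex-combination observation splits the mixed last argument into $\SSS_k \cup \SSS_{k-N}$, and a routine check of the index window $[iT,(i+N+1)T]$ against $[kT,(k+1)T]$ finishes via \ref{con1}. The paper delegates its own proof to an external reference, so a line-by-line comparison is not possible, but there is no other reasonable argument and yours is complete modulo one small bookkeeping point: for $k < N$ some indices $i \in \{k-N,\ldots,k\}$ are negative, where \ref{con1} does not directly apply; you should invoke the paper's convention $b_{-i} \triangleq b_0$ (so $\SSS_i = \SSS_0$ for $i<0$) and then apply \ref{con1} at index $0$, noting that $t \le (k+1)T \le (N+1)T$ in that regime, so the required time window still holds.
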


Next, define
\begin{equation}\label{eq:S}
\SSS(t) \triangleq \SSS_\xi \cap \SSS_\psi(t).
\end{equation}

Since \Cref{fact:S(t)} implies that for all $t \ge 0$, $\SSS_\psi(t) \subseteq \SSS_u(t)$, it follows that for all $t \ge 0$, $\SSS(t) \subseteq \SSS_\rms(t)$. 

It follows from \cite[Prop.~6]{safari2024TSCT} that if $L_gL_f^{r-1}\psi_0$ is nonzero, then $\psi_0$ has relative degree $r$ with respect to \eqref{eq:affine control}.
This result also shows that $L_gL_f^{r-1}\psi_0$ is a convex combination of $L_gL_f^{r-1}b_k,\ldots,L_gL_f^{r-1}b_{k-N}$, which are nonzero from \ref{con4}.

The next section constructs a single time-varying relaxed CBF that composes the input constraints (e.g., $\phi_1\ldots,\phi_\ell$), the \textit{a priori} known state constraints (e.g., $\xi_1, \ldots, \xi_{p}$), and the time-varying barrier function $\psi_0$, which is constructed from the real-time perception feedback $b_k$ using \eqref{eq:softmax_h}.

\section{Control Dynamics and Soft-Minimum Relaxed~CBF}

In order to address both state and input constraints (i.e., \ref{obj1} and \ref{obj2}), we adopt the method in \cite{compositionACC}. 
This approach uses control dynamics to transform input constraints into controller-state constraints, and uses a soft-minimum function to compose multiple candidate CBFs (one for each state and input constraint) into a single relaxed CBF.

\subsection{Control dynamics}

Consider a control $u$ that satisfies the linear time-invariant (LTI) dynamics 
\begin{equation}\label{eq:dynamics_control.a}
    \dot u(t) = A_\rmc u(t) + B_\rmc v(t),
\end{equation}
where $A_\rmc \in \BBR^{m \times m}$ is asymptotically stable, $B_\rmc \in \BBR^{m \times m}$ is nonsingular, $u(0)=u_0 \in \BBR^m$ is the initial condition, and $v: [0, \infty) \to \BBR^m$ is the \textit{surrogate control}, that is, the input to the control dynamics.

The cascade of~\cref{eq:affine control,eq:dynamics_control.a} is
\begin{equation}\label{eq:dynamics_aug.1}
    \dot{\tilde x} = \tilde f(\tilde x) + \tilde g(\tilde x) v(\tilde x),
\end{equation}
where
\begin{gather}\label{eq:dynamics_aug.2}
    \tilde x \triangleq  \begin{bmatrix}
            x \\ u
        \end{bmatrix}, \qquad 
       \tilde f( \tilde x) \triangleq \begin{bmatrix}
            f(x) + g(x) u\\
            A_\rmc u
        \end{bmatrix}, \qquad 
        \tilde g \triangleq \begin{bmatrix}
            0\\
            B_\rmc
        \end{bmatrix}, 
\end{gather}
and $\tilde x_0 \triangleq \begin{bmatrix} x_0^\rmT& u_0^\rmT \end{bmatrix}^\rmT$.
Define 
\begin{equation}\label{eq:hat S}
\tilde \SSS(t) \triangleq \SSS(t) \times \SU,
\end{equation}
which is the set of cascade states $\tilde x$ such that the state constraint (i.e., $x(t) \in \SSS(t)$) and the input constraint (i.e., $u(t) \in \SU$) are satisfied. 


The control dynamics \Cref{eq:dynamics_control.a} transform the input constraints into controller-state constraints. 
However, the control dynamics increase the relative degree of the state constraints $\psi_0,\xi_1, \dots, \xi_{p}$.
It follows from \cite[Proposition~1]{compositionACC} that the control dynamics \Cref{eq:dynamics_control.a} increase the relative degree of the state constraints by one. 
Therefore, the relative degree of $\psi_0$ with respect to the cascade \Cref{eq:dynamics_aug.1,eq:dynamics_aug.2} is $r + 1$ and the relative degree of $\xi_1, \dots, \xi_{p}$ with respect to the cascade \Cref{eq:dynamics_aug.1,eq:dynamics_aug.2} is $d + 1$.
We also note that for all $j \in \{1, 2, \dots, \ell\}$, $\phi_1(u)\ldots,\phi_\ell(u)$ has relative degree one with respect to the cascade \Cref{eq:dynamics_aug.1,eq:dynamics_aug.2}.

\subsection{Composite Soft-Minimum Relaxed CBF}

Since $\psi_0$ and $\xi_1, \dots, \xi_{p}$ have relative degree greater than one, we use a higher-order approach to construct a higher-order candidate CBFs. 

First, for all $i \in \{0, 1, \cdots, r-1\}$, let $\alpha_{\psi, i}:\mathbb{R} \to \mathbb{R}$ be an $(r -i)$-times continuously differentiable extended class-$\mathcal{K}$ function, 
and consider $\psi_i:[0, \infty) \times \mathbb{R}^{n+m}  \to \mathbb{R}$ defined by
\begin{equation}\label{eq:HOCBF.psi}
\psi_{i+1}(t,\tilde x) \triangleq \textstyle\frac{\partial \psi_{i}(t,\tilde x)}{\partial t} + L_{\tilde f} \psi_{i}(t,\tilde x) +\alpha_{\psi, i}(\psi_{i}(t,\tilde x)).
\end{equation}


Similarly, for all $i \in \{0, 1, \ldots, d-1 \}$ and all $j \in \{1, 2, \ldots, p \}$, let $\alpha_{\xi, j, i}:\mathbb{R} \to \mathbb{R}$ be an $(d -i)$-times continuously differentiable extended class-$\mathcal{K}$ function, 
and consider $\xi_{j,i}:\mathbb{R}^{n+m}  \to \mathbb{R}$ defined by
\begin{equation}\label{eq:HOCBF.varphi}
\xi_{j, i+1}(\tilde x) \triangleq  L_{\tilde f} \xi_{j, i}(\tilde x) +\alpha_{\xi, j, i}(\xi_{j, i}(\tilde x)).
\end{equation}

Next, define
\begin{align}\label{eq:common set varphi.varpsi.u}
\bar \SC(t) &\triangleq \{ \tilde x \in \BBR^{n+m} \colon \psi_0(t,\tilde x) \ge 0,\ldots,\psi_{r}(t,\tilde x) \ge 0, \nn\\
&\qquad \xi_{1,0}(\tilde x) \ge 0,\ldots,\xi_{1,d}(\tilde x) \ge 0,\ldots, \nn\\
&\qquad \xi_{p,0}(\tilde x) \ge 0,\ldots,\xi_{p,d}(\tilde x) \ge 0,\nn\\
&\qquad \phi_{1}(\tilde x) \ge 0,\ldots,\phi_{\ell}(\tilde x) \ge 0 \},
\end{align}
and note that $\bar \SC(t) \subseteq \tilde \SSS (t) = \SSS(t) \times \SU$. 
We also define
\begin{align*}
    \bar \SH(t) &\triangleq \{ \tilde x \in \BBR^{n+m} \colon \psi_{r}(t,\tilde x) \ge 0, \xi_{1,d}(\tilde x) \ge 0,\ldots, \\
&\qquad \xi_{p,d}(\tilde x) \ge 0,\phi_{1}(\tilde x) \ge 0,\ldots,\phi_{\ell}(\tilde x) \ge 0 \},
\end{align*}
It follows from \cite[Theorem~3]{xiao2021high} that if $\tilde x_0 \in \bar \SC(0)$ and for all $t \geq 0$, $\tilde x(t) \in \bar \SH(t)$, then for all $t \geq 0$, $\tilde x(t) \in \bar \SC(t)$. 
Thus, we consider a candidate CBF whose zero-superlevel set is a subset of $\bar \SH(t)$. 
Specifically, let $\varepsilon > 0$ and consider $h:[0, \infty) \times \BBR^{n+m} \to \BBR$ defined by 
\begin{align}\label{eq:softmin h}
    h(t, \tilde x) &\triangleq \mbox{softmin}_\varepsilon \Big( \psi_{r}(t, \tilde x), \xi_{1,d}(\tilde x), \cdots, \xi_{p,d}(\tilde x), \nn \\ &\qquad   \phi_{1}(\tilde x), \cdots, \phi_{\ell}(\tilde x) \Big).   
\end{align}
The zero-superlevel set of $h$ is 
\begin{equation*} 
    \SH(t) \triangleq \{\tilde x \in \BBR^{n+m} \colon h(t, \tilde x) \geq 0 \}.
\end{equation*}
\Cref{fact:softmin_limit} implies that $\SH(t) \subseteq \bar \SH(t)$. 
In fact, \Cref{fact:softmin_limit} shows that as $\epsilon \to \infty$, $\SH(t) \to \bar \SH(t)$.
In other words, for sufficiently large $\epsilon >0$, $\SH(t)$ approximates $\bar \SH(t)$. 

Next, define 
\begin{equation*}
    \SB(t) \triangleq \Big \{ \tilde x \in \bd \SH(t) \colon \textstyle\frac{\partial h(t,\tilde x)}{\partial t} + L_{\tilde f} h(t,\tilde x) \leq 0 \Big\}, 
\end{equation*}
which is the set of all states on the boundary of the zero-superlevel set of $h$ such that if $v=0$, then the time derivative of $h$ is nonpositive along the trajectories of \Cref{eq:dynamics_aug.1,eq:dynamics_aug.2}.
We assume that on $v$ directly impacts the time derivative of $h$ on $\SB$. 
Specifically, we make the following assumption:
\begin{enumerate}[leftmargin=0.9cm]
\renewcommand{\labelenumi}{(A\arabic{enumi})}
\renewcommand{\theenumi}{(A\arabic{enumi})}
\setcounter{enumi}{5}
\item\label{con6}
For all $(t,\tilde x) \in [0,\infty) \times \SB(t)$, $L_{\tilde g}h(t,\tilde x) \ne 0$.
\end{enumerate}
Assumption \ref{con6} is related to the constant-relative-degree assumption often invoked with CBF approaches. 
In this work, $L_{\tilde g} h$ is assumed to be nonzero on $\SB$, which is a subset of the boundary of the zero-superlevel set of $h$.
The next result is from \cite[Proposition~8]{safari2024TSCT} and demonstrates that $h$ is a relaxed CBF in the sense that it satisfies the CBF condition on $\SB$.

\begin{proposition}\label{prop:CFI}
\rm
Assume \ref{con6} is satisfied. 
Then, for all $(t,\tilde x) \in [0,\infty) \times  \mbox{bd }\SH(t)$
\begin{equation}
\sup_{v \in \BBR^m} \left [ \textstyle\frac{\partial h(t,\tilde x)}{\partial t} + L_{\tilde f} h(t,\tilde x) + L_{\tilde g} h(t,\tilde x) v \right ]  \ge 0.\label{def:RCBF.1}
\end{equation} 
\end{proposition}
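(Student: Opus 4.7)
The plan is to prove the inequality by a dichotomy on whether the point $\tilde x$ lies in $\SB(t)$ or in its complement within $\bd\SH(t)$. The definition of $\SB(t)$ carves $\bd\SH(t)$ exactly along the sign of the drift term $\tfrac{\partial h(t,\tilde x)}{\partial t} + L_{\tilde f}h(t,\tilde x)$, so each piece admits an explicit witness $v$ that makes the supremand nonnegative.

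First, suppose $\tilde x \in \bd\SH(t) \setminus \SB(t)$. Then by the definition of $\SB(t)$, the drift already satisfies
\begin{equation*}
\textstyle\frac{\partial h(t,\tilde x)}{\partial t} + L_{\tilde f}h(t,\tilde x) > 0.
\end{equation*}
Choosing $v = 0$ immediately yields a nonnegative value of the bracketed expression in \eqref{def:RCBF.1}, so the supremum is nonnegative.

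Second, suppose $\tilde x \in \SB(t)$. By Assumption \ref{con6}, $L_{\tilde g}h(t,\tilde x) \neq 0$. Since $v \in \BBR^m$ is unconstrained and $L_{\tilde g}h(t,\tilde x) \in \BBR^{1 \times m}$ is a nonzero row, the map $v \mapsto L_{\tilde g}h(t,\tilde x)\,v$ is a surjective linear functional onto $\BBR$. In particular, one may take $v = \lambda \bigl(L_{\tilde g}h(t,\tilde x)\bigr)^\rmT$ and let $\lambda \to \infty$ to make the term $L_{\tilde g}h(t,\tilde x)\,v$ arbitrarily large. The finite drift term $\tfrac{\partial h(t,\tilde x)}{\partial t} + L_{\tilde f}h(t,\tilde x)$ is then dominated, and the supremum over $v \in \BBR^m$ is $+\infty$, hence certainly nonnegative.

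Combining both cases, the supremum is nonnegative for every $(t,\tilde x) \in [0,\infty) \times \bd\SH(t)$, establishing \eqref{def:RCBF.1}. I do not anticipate a substantive obstacle here: the entire content of the statement is the case split enabled by the definition of $\SB(t)$ together with the nonvanishing of $L_{\tilde g}h$ guaranteed on $\SB(t)$ by \ref{con6}. The only care needed is to note that Assumption \ref{con6} is invoked exactly where it is required, namely on $\SB(t)$, and nowhere else, since off of $\SB(t)$ the drift alone already delivers the bound.
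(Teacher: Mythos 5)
Your proof is correct, and it takes the approach that the paper's definitions are tailor-made to support. The paper itself does not reprint the proof of \Cref{prop:CFI} — it cites \cite[Proposition~8]{safari2024TSCT} — but the dichotomy you use is the natural one: the set $\SB(t)$ is defined precisely so that on $\bd\,\SH(t) \setminus \SB(t)$ the drift $\tfrac{\partial h}{\partial t} + L_{\tilde f}h$ is already positive (so $v=0$ is a witness), while on $\SB(t)$ Assumption~\ref{con6} guarantees $L_{\tilde g}h \neq 0$, so the linear term $L_{\tilde g}h(t,\tilde x)\,v$ is unbounded above over $v \in \BBR^m$ and dominates the finite drift. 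Both legs of the argument are sound, the surjectivity observation for the nonzero row $L_{\tilde g}h$ is exactly what is needed, and you correctly locate where \ref{con6} is invoked. This is essentially the same argument the cited reference gives.
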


Since $h$ is a relaxed CBF, Nagumo's theorem \cite[Corollary~4.8]{blanchini2008set} suggests that there exist a control such that $\SH(t)$ is forward invariant with respect to the \Cref{eq:dynamics_aug.1,eq:dynamics_aug.2}. 
However, $\SH(t)$ is not necessarily a subset of the $\tilde \SSS(t)$.
Therefore, the next result, which follows from \cite[Proposition~9]{safari2024TSCT}, is useful because it shows that forward invariance of $\SH(t)$ implies forward invariance of 
\begin{equation*}
    \SC(t) \triangleq \SH(t) \cap \bar \SC(t),
\end{equation*}
which is a subset of the $\tilde \SSS(t)$.

\begin{proposition}\label{prop:Forward_Invariant}
\rm
Consider \Cref{eq:affine control,eq:dynamics_aug.1,eq:dynamics_aug.2}, where \Cref{con3,con1_varphi} are satisfied and $\tilde x_0 \in \SC(0)$. 
Assume there exists $\bar{t} \in (0,\infty]$ such that for all $t \in [0,\bar{t})$, $\tilde x(t) \in \SH(t)$.
Then, for all $t \in [0,\bar{t})$, $\tilde x(t) \in \SC(t)$.
\end{proposition}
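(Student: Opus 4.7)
The plan is to show that on $[0,\bar t)$ the trajectory lies in $\bar\SC(t)$; combined with the hypothesis $\tilde x(t)\in\SH(t)$, this gives $\tilde x(t)\in\SH(t)\cap\bar\SC(t)=\SC(t)$. The argument proceeds by first unpacking $h\ge 0$ via the soft-minimum inequality and then descending through the higher-order barrier chains for $\psi_0$ and for each $\xi_j$ using the comparison lemma.

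First I would invoke \Cref{fact:softmin_limit}. Since $h(t,\tilde x(t))\ge 0$ and the soft-minimum lower-bounds the minimum of its arguments, every argument in \eqref{eq:softmin h} must be nonnegative along the trajectory. Thus, for all $t\in[0,\bar t)$,
\begin{equation*}
\psi_r(t,\tilde x(t))\ge 0,\quad \xi_{j,d}(\tilde x(t))\ge 0\ \forall j,\quad \phi_k(\tilde x(t))\ge 0\ \forall k.
\end{equation*}
The nonnegativity of the $\phi_k$ already establishes the input-constraint portion of $\bar\SC(t)$, so what remains is to propagate the nonnegativity of $\psi_r$ backwards through $\psi_{r-1},\ldots,\psi_0$ and similarly for each $\xi_{j,d}$ backwards through $\xi_{j,d-1},\ldots,\xi_{j,0}$.

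Next I would exploit that the control dynamics \eqref{eq:dynamics_control.a} raise the relative degree of $\psi_0$ on the cascade \Cref{eq:dynamics_aug.1,eq:dynamics_aug.2} from $r$ to $r+1$, so that $L_{\tilde g}\psi_i=0$ for $i\in\{0,\ldots,r-1\}$. Using \eqref{eq:HOCBF.psi}, the time derivative along cascade trajectories then telescopes to
\begin{equation*}
\tfrac{d}{dt}\psi_i(t,\tilde x(t))=\tfrac{\partial\psi_i}{\partial t}+L_{\tilde f}\psi_i=\psi_{i+1}(t,\tilde x(t))-\alpha_{\psi,i}(\psi_i(t,\tilde x(t))).
\end{equation*}
Assuming $\psi_{i+1}(t,\tilde x(t))\ge 0$ on $[0,\bar t)$ (the inductive hypothesis, with base case $i+1=r$ from Step~1), this gives $\dot\psi_i\ge-\alpha_{\psi,i}(\psi_i)$. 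Since $\alpha_{\psi,i}$ is extended class-$\SK$ and $\psi_i(0,\tilde x_0)\ge 0$ because $\tilde x_0\in\SC(0)\subseteq\bar\SC(0)$, the comparison lemma applied to the scalar ODE $\dot y=-\alpha_{\psi,i}(y)$ yields $\psi_i(t,\tilde x(t))\ge 0$ for all $t\in[0,\bar t)$. Iterating from $i=r-1$ down to $i=0$ covers all $\psi_i$.

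An identical backward induction, applied to the time-invariant chain \eqref{eq:HOCBF.varphi} and using that each $\xi_j$ has relative degree $d+1$ on the cascade (so $L_{\tilde g}\xi_{j,i}=0$ for $i<d$), gives $\xi_{j,i}(\tilde x(t))\ge 0$ for $i=d-1,\ldots,0$ and every $j\in\{1,\ldots,p\}$. Assembling these inequalities with those of Step~1 and the definition \eqref{eq:common set varphi.varpsi.u} places $\tilde x(t)\in\bar\SC(t)$, and hence $\tilde x(t)\in\SC(t)$. The main technical obstacle is verifying cleanly that $L_{\tilde g}\psi_i=0$ and $L_{\tilde g}\xi_{j,i}=0$ throughout the indicated range of $i$; this is precisely what the ``relative degree increases by one'' property of the LTI control dynamics buys us, and once it is in hand the remainder of the proof is a routine comparison-lemma induction.
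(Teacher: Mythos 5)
Your proof is correct and matches the approach the paper relies on (it cites \cite[Prop.~9]{safari2024TSCT} and \cite[Theorem~3]{xiao2021high} rather than spelling the argument out): the soft-minimum lower bound in \Cref{fact:softmin_limit} gives $\SH(t)\subseteq\bar\SH(t)$, and a backward comparison-lemma induction through the $\psi_i$ and $\xi_{j,i}$ chains then propagates nonnegativity down to $\psi_0$ and each $\xi_{j,0}$, placing $\tilde x(t)\in\bar\SC(t)$ and hence in $\SH(t)\cap\bar\SC(t)=\SC(t)$. The technical point you flag is handled exactly as you suggest: because $\psi_0$ has relative degree $r+1$ and each $\xi_j$ has relative degree $d+1$ with respect to the cascade \Cref{eq:dynamics_aug.1,eq:dynamics_aug.2}, one has $L_{\tilde g}\psi_i=0$ for $i<r$ and $L_{\tilde g}\xi_{j,i}=0$ for $i<d$, so the scalar inequalities $\dot\psi_i\ge-\alpha_{\psi,i}(\psi_i)$ and $\dot\xi_{j,i}\ge-\alpha_{\xi,j,i}(\xi_{j,i})$ hold along closed-loop trajectories and the comparison lemma applies.
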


\section{Optimal Control with State and Input Constraints}

This section uses the relaxed CBF $h$ to construct a closed-form control that satisfies state and input constraints. 
We note that the control $u$ is generated from the LTI control dynamics \Cref{eq:dynamics_control.a}, where the surrogate control $v$ is the input. 
Thus, this section aims to construct a surrogate control such that the state and input constraints are satisfied and \ref{obj3} is accomplished, that is, the control $u$ is as closed as possible to the desired control $u_\rmd$. 
To address \ref{obj3}, we consider the \textit{desired surrogate control} $v_\rmd: [0, \infty) \times \BBR^{n+m} \to \BBR^m$ defined by
\begin{equation}\label{eq:u_d_hat_def}
    v_\rmd(t, \tilde x) \triangleq B_\rmc^{-1} \Big( L_{f} u_\rmd(t, x)- A_\rmc+\sigma \left( u_\rmd(t, x)- u(t) \right) \Big),
\end{equation}
where $\sigma > 0$. 
Proposition~2 in \cite{compositionACC} shows that if $v = v_\rmd$, then the control $u$ converges exponentially to the desired control $u_\rmd$.
Thus, we aim to generate a surrogate control $v$ that is as close as possible to $v_\rmd$ subject to a CBF-based constraint that ensures state and input constraints are satisfied.

Consider the constraint function $b \colon [0,\infty) \times \BBR^{n+m} \times \BBR^m \times \BBR \to \BBR$ given by
\begin{align}
b(t,\tilde x,\hat{v},\hat{\mu}) &\triangleq \textstyle\frac{\partial h(t,\tilde x)}{\partial t} + L_{\tilde f} h(t,\tilde x) + L_{\tilde g}h(t,\tilde x)\hat{v} \nn \\
        & \qquad +\alpha(h(t,\tilde x)) + \hat{\mu}h(t,\tilde x),
        \label{eq:safety_constraint}
\end{align}
where $\hat v$ is the control variable; $\hat \mu$ is a slack variable; and $\alpha \colon \BBR \to \BBR$ is locally Lipschitz and nondecreasing such that $\alpha(0)=0$. 
Next, let $\gamma > 0$, and consider the cost function  $\SJ \colon [0,\infty) \times \BBR^{n+m} \times \BBR^m \times \BBR \to \BBR$ given by
\begin{equation}\label{eq:SJ}
 \SJ(t, \tilde x, \hat{v}, \hat{\mu}) \triangleq \frac{1}{2}\| \hat v -v_{\rm{d}}(t, \tilde x) \| ^2 + \frac{\gamma}{2} \hat{\mu}^2, 
\end{equation} 
The objective is to synthesize $(\hat v,\hat \mu)$ that minimizes the cost $\SJ(t,\tilde x,\hat{v},\hat{\mu})$ subject to the relaxed CBF safety constraint $b(t,\tilde x,\hat{v},\hat{\mu}) \ge 0$.

For each $(t,\tilde x) \in [0,\infty) \times \BBR^{n+m}$, the minimizer of $\SJ(t,\tilde x,\hat v,\hat \mu)$ subject to $b(t,\tilde x,\hat{v},\hat{\mu}) \ge 0$ can be obtained from the first-order necessary conditions for optimality.
For example, see \cite{wieland2007constructive,ames2016control}.
The minimizer of $\SJ(t,\tilde x,\hat v,\hat \mu)$ subject to $b(t,\tilde x,\hat{v},\hat{\mu}) \ge 0$ is the control $v_* \colon [0, \infty) \times \mathbb{R}^{n+m}$ and slack variable $\mu_* \colon [0,\infty) \times \BBR^{n+m} \to \BBR$ given by
\begin{gather}
v_* (t,\tilde x) \triangleq v_d(t, \tilde x) + \lambda(t,\tilde x) L_{\tilde g}h(t,\tilde x)^\rmT , \label{eq:uclose}  \\ 
\mu_*(t,\tilde x) \triangleq  \gamma^{-1} h(t,\tilde x) \lambda(t,\tilde x), \label{eq:mu_close}
\end{gather}
where $\lambda,\omega \colon [0, \infty) \times \BBR^{n+m} \to \BBR$ are defined by 
\begin{align}
        \lambda(t,\tilde x) &\triangleq \begin{cases}
            \tfrac{-\omega(t,\tilde x)}{\| L_{\tilde g} h(t,\tilde x)^\rmT \|^2 + \gamma^{-1}h(t,\tilde x)^2},& \omega(t,\tilde x) < 0,\\
            0, & \omega(t,\tilde x) \ge 0,
        \end{cases} \label{eq:ulambda} \\
\omega(t,\tilde x) &\triangleq b(t,\tilde x,v_\rmd(t,\tilde x),0). \label{eq:omegabar}
\end{align}


The next result demonstrates that $(v_*(t, \tilde x), \mu_*(t, \tilde x))$ is the unique global minimizer of $\SJ(t,\tilde x,\hat v,\hat \mu)$ subject to $b(t,\tilde x,\hat{v},\hat{\mu}) \ge 0$.

\begin{theorem}\label{thm:global minimizer}
\rm
Assume \ref{con6} is satisfied. 
Let $t \ge 0$ and $\tilde x \in \BBR^{n+m}$.
Furthermore, let $v \in \BBR^m$ and $\mu \in \BBR$ be such that $b(t,\tilde x,v,\mu) \ge 0$ and $(v,\mu) \ne (v_*(t,\tilde x),\mu_*(t,\tilde x))$.
Then, 
\begin{equation} \label{eq:Global_Min}
    \SJ(t,\tilde x,v,\mu) > \SJ(t,\tilde x,v_*(t,\tilde x),\mu_*(t,\tilde x)).
\end{equation}
\end{theorem}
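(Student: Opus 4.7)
The plan is to recognize the problem as a strictly convex quadratic program in $(\hat v,\hat \mu)$ with a single affine inequality constraint. For fixed $(t,\tilde x)$, the cost $\mathcal{J}$ is strictly convex because its Hessian is $\mathrm{diag}(I_m,\gamma)$ with $\gamma>0$, and the constraint $b(t,\tilde x,\hat v,\hat\mu)\ge 0$ is affine in $(\hat v,\hat\mu)$ because it equals $\omega(t,\tilde x)+L_{\tilde g}h(t,\tilde x)(\hat v-v_\rmd(t,\tilde x))+h(t,\tilde x)\hat\mu$. Hence the feasible set is a closed convex half-space, KKT conditions are necessary and sufficient for optimality, and a KKT point is automatically the unique global minimizer. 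My task reduces to verifying that $(v_*(t,\tilde x),\mu_*(t,\tilde x))$ satisfies the KKT conditions in every case covered by \eqref{eq:ulambda}.

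I would split on the sign of $\omega(t,\tilde x)$. In the case $\omega(t,\tilde x)\ge 0$, the unconstrained minimizer $(\hat v,\hat\mu)=(v_\rmd,0)$ is feasible since $b(t,\tilde x,v_\rmd,0)=\omega\ge 0$, so it is also the constrained minimizer; this matches $(v_*,\mu_*)$ because $\lambda(t,\tilde x)=0$ by \eqref{eq:ulambda}. In the case $\omega(t,\tilde x)<0$, the constraint is active at the optimum, and the stationarity conditions $\nabla_{\hat v}\mathcal{L}=0$ and $\nabla_{\hat\mu}\mathcal{L}=0$ yield $\hat v-v_\rmd=\lambda L_{\tilde g}h^\rmT$ and $\gamma\hat\mu=\lambda h$, recovering \eqref{eq:uclose}--\eqref{eq:mu_close}. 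Substituting into $b=0$ gives $\lambda(\|L_{\tilde g}h^\rmT\|^2+\gamma^{-1}h^2)=-\omega$, reproducing the first branch of \eqref{eq:ulambda} and yielding $\lambda>0$, which certifies dual feasibility and complementarity.

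The one delicate point is well-posedness of $\lambda$ when $\omega<0$: the denominator vanishes precisely when $L_{\tilde g}h=0$ and $h=0$, in which case $b(t,\tilde x,\hat v,\hat\mu)=\omega$ identically, and feasibility would force $\omega\ge 0$, contradicting the hypothesis of this case. So the denominator is automatically positive under the standing assumption that the feasible set is nonempty at $(t,\tilde x)$, and Assumption \ref{con6} plays only a corroborating role (ensuring this nondegeneracy whenever the iterate sits on $\mathrm{bd}\,\SH(t)$). I would close the proof by noting that, in either case, $(v_*,\mu_*)$ is a KKT point of a strictly convex QP with a single affine inequality, hence the unique global minimizer, and therefore $\mathcal{J}(t,\tilde x,v,\mu)>\mathcal{J}(t,\tilde x,v_*,\mu_*)$ for every admissible $(v,\mu)\ne(v_*,\mu_*)$, which is exactly \eqref{eq:Global_Min}. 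I do not anticipate a serious obstacle; the argument is a clean KKT calculation, with the only subtlety being the denominator discussion above.
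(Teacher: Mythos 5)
Your proposal is correct, and it reaches the same conclusion by a conceptually equivalent but differently packaged argument. The paper does not invoke general KKT sufficiency; instead it verifies optimality by direct algebra: it sets $\Delta \SJ \triangleq \SJ(t,\tilde x,v,\mu)-\SJ(t,\tilde x,v_*(t,\tilde x),\mu_*(t,\tilde x))$, completes the square using \cref{eq:uclose,eq:mu_close,eq:ulambda,eq:omegabar} and the complementary-slackness identity $\lambda\,b(t,\tilde x,v_*,\mu_*)=0$ to obtain
\begin{equation*}
\Delta \SJ \;=\; \tfrac{1}{2}\,\|v-v_*\|^2 \;+\; \tfrac{\gamma}{2}\,(\mu-\mu_*)^2 \;+\; \lambda\, b(t,\tilde x,v,\mu),
\end{equation*}
and then concludes $\Delta\SJ>0$ from $\lambda\ge 0$, $b(t,\tilde x,v,\mu)\ge 0$, and $(v,\mu)\ne(v_*,\mu_*)$. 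Your route goes the other way: you exhibit $(v_*,\mu_*)$ as a KKT point of a strictly convex QP with a single affine constraint (splitting on the sign of $\omega$) and appeal to KKT sufficiency plus strict convexity for uniqueness. The two are the same argument at bottom — the paper's completion of the square is precisely the hands-on proof that a KKT point of a strictly convex QP is the unique global minimizer — but the paper's version is self-contained, whereas yours makes the structural reason for uniqueness more transparent and surfaces a point the paper leaves implicit: your observation that, when $\omega<0$, a vanishing denominator in \eqref{eq:ulambda} would force $b\equiv\omega<0$ and hence infeasibility is correct, and it nicely explains why $\lambda$ is well defined for every $(t,\tilde x)$ at which the feasibility hypothesis of the theorem can hold, independently of \ref{con6}.
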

\begin{proof}
Define $\textstyle{\Delta \SJ \triangleq \SJ(t, \tilde{x}, v, \mu) - \SJ(t, \tilde{x}, v_*(t, \tilde{x}), \mu_*(t, \tilde{x}))}$, $Q \triangleq I_m$, and $c \triangleq -v_d(t, \tilde{x})$. Using \Cref{eq:safety_constraint,eq:uclose,eq:mu_close,eq:ulambda} and direct computation, it follows that
\begin{align*}
\Delta \SJ &= \frac{1}{2}v^TQv + c^Tv + \frac{\gamma}{2}\mu^2 - \frac{1}{2}v_*^TQv_* - c^Tv_* - \frac{\gamma}{2}\mu_*^2\\
&= \frac{1}{2}(v - v_*)^TQ(v - v_*) + \frac{\gamma}{2}(\mu - \mu_*)^2 + \lambda b(t, \tilde{x}, v, \mu),
\end{align*}
where $(t, \tilde{x})$ are omitted for brevity. Since $\lambda \geq 0$ and $b(t, \tilde{x}, v, \mu) \geq 0$, it follows that
\begin{align}
\Delta \SJ \geq \frac{1}{2}(v - v_*)^T(v - v_*) + \frac{\gamma}{2}(\mu - \mu_*)^2.
\end{align}
Since $\gamma > 0$, and $(v, \mu) \neq (v_*, \mu_*)$, it follows that $\Delta \SJ > 0$, which confirms \eqref{eq:Global_Min}.
\end{proof}
The following theorem is the main result of satisfaction of state and input constraints. 
It demonstrate that the control makes $\SC(t) \subseteq \tilde \SSS (t) = \SSS(t) \times \SU$ forward invariant. 

\begin{theorem}\label{Th:Main th}
\rm
Consider \eqref{eq:affine control}, where \ref{con1_varphi}--\ref{con6} are satisfied. 
Let $u$ be given by \Cref{eq:dynamics_control.a}, where $v=v_*$ and $v_*$ is given by~\Cref{eq:uclose,eq:ulambda,eq:omegabar}.
Assume that $h^\prime$ is locally Lipschitz in $\tilde x$ on $\BBR^{n+m}$.
Then, for all $\tilde x_0 \in \SC(0)$, the following statements hold: 
\begin{enumerate}
\item \label{main.thm.1}
There exists a maximum value $t_{\rm m}(\tilde x_0) \in (0,\infty]$ such that \Cref{eq:dynamics_aug.1,eq:dynamics_aug.2} with $v=v_*$ has a unique solution on $[0, t_{\rm m}(\tilde x_0))$. 

\item \label{main.thm.2}
For all $t \in [0, t_{\rm m}(\tilde x_0))$, $\tilde x(t) \in \SC(t) \subseteq \tilde \SSS(t)$.

\end{enumerate}
\end{theorem}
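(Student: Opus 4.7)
The plan is to prove the two conclusions of the theorem in sequence, first establishing local existence and uniqueness of the closed-loop trajectory, then showing forward invariance of $\SH(t)$ along it, and finally transferring this invariance to $\SC(t)$ via \Cref{prop:Forward_Invariant}. For part~\ref{main.thm.1}, the closed-loop cascade has right-hand side $\tilde f(\tilde x) + \tilde g(\tilde x)\, v_*(t,\tilde x)$, which is continuous in $t$, and I need to show it is locally Lipschitz in $\tilde x$. Since $\tilde f$ and $\tilde g$ inherit local Lipschitz continuity from $f$, $g$, and the LTI control dynamics, the real work lies in $v_*$. Inspecting \Cref{eq:uclose,eq:ulambda,eq:omegabar}, $v_*$ is piecewise defined through $\lambda$: on $\{\omega \ge 0\}$, $\lambda \equiv 0$ so $v_* = v_\rmd$, which inherits smoothness from the problem data; on $\{\omega < 0\}$, $\lambda$ is the ratio $-\omega/(\|L_{\tilde g}h\|^2 + \gamma^{-1}h^2)$, whose denominator is strictly positive except possibly when both $h = 0$ and $L_{\tilde g}h = 0$. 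Assumption~\ref{con6} rules out this bad case on $\SB(t)\subseteq\mbox{bd }\SH(t)$, and elsewhere either $h\ne 0$ (so the denominator is bounded below by $\gamma^{-1}h^2 > 0$) or $\omega \ge 0$ (so the branch is inactive). Combined with the hypothesis that $h^\prime$ is locally Lipschitz in $\tilde x$, this yields local Lipschitz continuity of $\lambda$ on each branch, and continuity across the switching surface $\{\omega = 0\}$ is automatic since $\lambda\to 0$ from both sides. Standard ODE theory then delivers a unique solution on a maximal forward interval $[0, t_{\rm m}(\tilde x_0))$.

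For part~\ref{main.thm.2}, I would first show forward invariance of $\SH(t)$ along the closed-loop trajectory. The time derivative of $h$ along the closed loop satisfies
\begin{equation*}
\dot h(t,\tilde x(t)) = \frac{\partial h(t,\tilde x)}{\partial t} + L_{\tilde f} h(t,\tilde x) + L_{\tilde g} h(t,\tilde x)\, v_*(t,\tilde x) = b(t,\tilde x, v_*, 0) - \alpha(h(t,\tilde x)).
\end{equation*}
On $\{\omega \ge 0\}$, substituting $v_* = v_\rmd$ into \eqref{eq:safety_constraint} yields $b(t,\tilde x, v_*, 0) = \omega(t,\tilde x) \ge 0$; on $\{\omega<0\}$, substituting \eqref{eq:uclose} and \eqref{eq:ulambda} into \eqref{eq:safety_constraint} collapses $b(t,\tilde x,v_*,0)$ to $\omega\,\gamma^{-1}h^2/(\|L_{\tilde g}h\|^2 + \gamma^{-1}h^2)$, which vanishes at $h=0$. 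Consequently, at every boundary point of $\SH(t)$ the closed loop satisfies $\dot h \ge -\alpha(0) = 0$, and Nagumo's theorem \cite[Corollary~4.8]{blanchini2008set} (applied to the time-varying zero-superlevel set, noting that $\alpha(h)\ge 0$ whenever $h\ge 0$) gives $\tilde x(t)\in\SH(t)$ for all $t\in[0,t_{\rm m}(\tilde x_0))$. Since $\tilde x_0 \in \SC(0)\subseteq \SH(0)$, \Cref{prop:Forward_Invariant} upgrades this to $\tilde x(t) \in \SC(t)$ throughout the maximal interval, and the inclusion $\SC(t) \subseteq \bar \SC(t) \subseteq \tilde\SSS(t)$ noted after \eqref{eq:common set varphi.varpsi.u} finishes the proof.

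I expect the main obstacle to be the Lipschitz regularity step in part~\ref{main.thm.1}: the piecewise structure of $v_*$ together with the potential degeneracy of the denominator $\|L_{\tilde g}h\|^2 + \gamma^{-1}h^2$ forces a careful case analysis using \ref{con6}, and the switching from the inactive branch $\{\omega \ge 0\}$ to the active branch $\{\omega < 0\}$ must be shown to preserve local Lipschitz continuity rather than merely continuity. Once $v_*$ is certified locally Lipschitz, the forward-invariance computation in part~\ref{main.thm.2} is essentially an algebraic substitution, and the conclusion then follows mechanically from \Cref{prop:Forward_Invariant}.
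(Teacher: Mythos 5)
Your argument tracks the paper's proof closely: part~\ref{main.thm.1} proceeds via local Lipschitz continuity of $v_*$ and standard ODE existence/uniqueness, and part~\ref{main.thm.2} computes $\dot h$ along closed-loop trajectories, shows it is nonnegative on $\bd\,\SH(t)$ via the same case split on $\omega$, invokes Nagumo's theorem, and then upgrades invariance of $\SH(t)$ to invariance of $\SC(t)$ using \Cref{prop:Forward_Invariant}. The algebraic substitution giving $b(t,\tilde x,v_*,0)=\omega$ when $\omega\ge 0$ and $b(t,\tilde x,v_*,0)=\omega\gamma^{-1}h^2/(\|L_{\tilde g}h\|^2+\gamma^{-1}h^2)$ when $\omega<0$ matches the paper exactly, and your case analysis of the denominator degeneracy in part~\ref{main.thm.1} is correct.

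The one place where you are less rigorous than the paper is the application of Nagumo's theorem. The cited result \cite[Corollary~4.8]{blanchini2008set} is stated for autonomous dynamics and a time-invariant set, whereas $\SH(t)$ is time-varying. You wave at this with the parenthetical ``applied to the time-varying zero-superlevel set,'' but the paper handles it properly by augmenting the state with time: defining $\bar x \triangleq [t\ \tilde x^\rmT]^\rmT$ with $\dot{\bar x} = \bar f(\bar x)$ where the first component of $\bar f$ is $1$, setting $\bar h(\bar x) \triangleq h(t,\tilde x)$ and $\bar{\mathcal X} \triangleq \{\bar x : \bar h(\bar x)\ge 0\}$, and then applying Nagumo's theorem to this autonomous system and time-invariant set. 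After that, $L_{\bar f}\bar h$ is precisely $\partial h/\partial t + L_{\tilde f}h + L_{\tilde g}h\,v_*$, so the boundary computation is identical to yours. You should make this augmentation explicit; without it the citation does not literally apply. A second, minor point: your aside ``noting that $\alpha(h)\ge 0$ whenever $h\ge 0$'' plays no role, since on $\bd\,\SH(t)$ one has $h=0$ and hence $\alpha(h)=0$ exactly; the relevant fact is simply $\alpha(0)=0$.
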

\begin{proof}
To prove \ref{main.thm.1}, it follows from \Cref{eq:uclose,eq:omegabar,eq:ulambda} that $v_*$ is continuous in $t$ on $[0, \infty)$ and locally Lipschitz in $\tilde{x}$ on $\mathbb{R}^{n+m}$. Since, in addition, $\tilde{f}$ and $\tilde{g}$ are locally Lipschitz on $\mathbb{R}^{n+m}$, it follows from \cite[Theorem 3.1]{khalil2002control} that \Cref{eq:dynamics_aug.1,eq:dynamics_aug.2} with $v = v_*$ has a unique solution on $[0, t_m(\tilde{x}_0))$.

To prove \ref{main.thm.2}, it follows from \Cref{eq:dynamics_aug.1,eq:dynamics_aug.2} with $v = v_*$ that
\begin{align}
\dot{\bar{x}} = \bar{f}(\bar{x}),
\end{align}
where
\begin{align}\label{eq:taug_dyn}
\bar{x} \triangleq \begin{bmatrix} t \\ \tilde{x} \end{bmatrix}, \quad
\bar{f}(\bar{x}) \triangleq \begin{bmatrix} 1 \\ \tilde{f}(\tilde{x}) + \tilde{g}(\tilde{x})v_*(t, \tilde{x}) \end{bmatrix}, \quad \bar{x}_0 \triangleq \begin{bmatrix} 0 \\ \tilde{x}_0 \end{bmatrix}.
\end{align}

Next, define
\begin{align}\label{eq:taug_h}
\bar{h}(\bar{x}) \triangleq h(t, \tilde{x}),
\end{align}
\begin{align}\label{eq:taug_set}
\bar{\mathcal{X}} \triangleq \{\bar{x} \in [0, \infty) \times \mathbb{R}^{n+m} : \bar{h}(\bar{x}) \geq 0\},
\end{align}
and it follows from \Cref{eq:taug_dyn,eq:taug_h} that
\begin{align}\label{eq:taug_lfh}
L_{\bar{f}}\bar{h}(\bar{x}) = \frac{\partial h(t,\tilde{x})}{\partial t} + L_{\tilde{f}}h(t, \tilde{x}) + L_{\tilde{g}}h(t, \tilde{x})v_*(t, \tilde{x}).
\end{align}

Let $\bar{x}_1 \in \text{bd}\;\bar{\mathcal{X}}$, and let $(t_1, \tilde{x}_1) \in [0, \infty) \times \mathbb{R}^{n+m}$ be such that $\bar{x}_1^\rmT = [t_1 \; \tilde{x}_1^\rmT]$. Since $\bar{x}_1 \in \text{bd}\;\bar{\mathcal{X}}$, it follows from \Cref{eq:taug_h,eq:taug_set} that $h(t_1, \tilde{x}_1) = 0$. Thus, substituting \Cref{eq:uclose} into \Cref{eq:taug_lfh}, evaluating at $\bar{x}_1$, and using \Cref{eq:ulambda,eq:omegabar} yields
\begin{align}
L_{\bar{f}}\bar{h}(\bar{x}_1) =
\begin{cases}
0, & \omega(t_1, \tilde{x}_1) < 0,\\
\omega(t_1, \tilde{x}_1), & \omega(t_1, \tilde{x}_1) \geq 0.
\end{cases}
\end{align}
Thus, for all $\bar{x} \in \text{bd}\;\bar{\mathcal{X}}$, $L_{\bar{f}}\bar{h}(\bar{x}) \geq 0$.

Next, since $\tilde{x}_0 \in C(0) \subseteq \bar \SX(0)$, it follows from \Cref{eq:taug_h,eq:taug_set} that $\bar{x}_0 \in \bar{\mathcal{X}}$. Since, in addition, for all $\bar{x} \in \text{bd}\;\bar{\mathcal{X}}$, $L_{\bar{f}}\bar{h}(\bar{x}) \geq 0$, it follows from Nagumo's theorem \cite[Corollary 4.8]{blanchini2008set} that for all $t \in [0, t_m(\tilde{x}_0))$, $\bar{x}(t) \in \bar{\mathcal{X}}$. Thus, \Cref{eq:taug_h,eq:taug_set} imply that for all $t \in [0, t_m(\tilde{x}_0))$, $\tilde{x}(t) \in \SH(t)$. Therefore, \Cref{prop:Forward_Invariant} implies that for all $t \in [0, t_m(\tilde{x}_0))$, $\tilde{x}(t) \in C(t) \subseteq \tilde{S}(t)$, which yields \ref{main.thm.2}.
\end{proof}

\Cref{Th:Main th} demonstrates that the control 
\Cref{eq:dynamics_control.a,eq:uclose,eq:ulambda,eq:omegabar} with $v=v_*$ yields $\tilde x(t) \in \SC(t) \subseteq \tilde \SSS(t)$, which implies that $x(t) \in \SSS_\rms(t)$ and $u(t) \in \SU$. 
Hence, \ref{obj1} and \ref{obj2} are satisfied. 
Furthermore, $v_*$ is as closed as possible to $v_\rmd$, which is designed to satisfy \ref{obj3}.

\section{Application to a Ground Robot}
\label{section:ground robot}

Consider the nonholonomic ground robot modeled by \eqref{eq:affine control}, where
\begin{equation*}
    f(x) = \begin{bmatrix}
     v\cos\theta \\
     v\sin\theta \\
    0 \\
    0
    \end{bmatrix}, 
    \,
    g(x) = \begin{bmatrix}
    0 & 0\\
    0 & 0\\
    1 & 0 \\
    0 & 1
    \end{bmatrix}, 
    \,
    x = \begin{bmatrix}
    q_\rmx\\
    q_\rmy\\
    v\\
    \theta
    \end{bmatrix}, 
    \,
    u = \begin{bmatrix}
    u_1\\
    u_2
    \end{bmatrix}, 
\end{equation*}
and $q \triangleq [ \, q_\rmx \quad q_\rmy \, ]^\rmT$ is the robot's position in an orthogonal coordinate system, $v$ is the speed, and $\theta$ is the direction of the velocity vector (i.e., the angle from $[ \, 1 \quad 0 \, ]^\rmT$ to $[ \, \dot q_\rmx \quad \dot q_\rmy \, ]^\rmT$).

We consider the control input constraint for the robot. Specifically, the control must remain in the admissible set $\SU$ given by \eqref{eq:U_def}, where 
\begin{gather*}
    \phi_1(u) = 6 - u_1, \quad \phi_2(u) = u_1 + 6, \\ 
    \phi_3(u) = 4 - u_2, \quad \phi_4(u) = u_2 + 4, 
    \end{gather*}
which implies $\SU = \{ u \in \BBR^2 \colon | u_1 | \le 6 \mbox{ and } |u_2| \le 4 \}$ and $m = 4$. Next, for the known state constraints, we consider bounds on speed $v$. Specifically, the bounds on speed $v$ are modeled as the zero-superlevel sets of 
\begin{equation*}
    \xi_1 = 3 - v, \quad \xi_2 = v + 3, 
\end{equation*}
which implies $d = 1$ and $p = 2$. 

Next, for the unknown safe set $\SSS_u(t)$, we assume that the robot is equipped with a perception/sensing system (e.g., LiDAR) with $360^\circ$ field of view (FOV) that detects up to $\bar \ell$ points on objects that are: (i) in line of sight of the robot; (ii) inside the FOV of the perception system; and (iii) inside the detection radius $\bar r > 0$ of the perception system. 
Specifically, for all $k \in \BBN$, at time $t=kT$, the robot obtains raw perception feedback in the form of $\ell_k \in \{0,1,\ldots,\bar \ell \}$ points given by $(r_{1,k},\theta_{1,k}),\cdots,(r_{\ell_k,k},\theta_{\ell_k,k})$, which are the polar-coordinate positions of the detected points relative to the robot position $q(kT)$ at the time of detection.
For all $i\in \{1,2,\ldots,\ell_k \}$, $r_{i,k} \in [0,\bar r]$ and $\theta_{i,k} \in [0,2\pi)$.

To synthesize the perception feedback function $b_k$, we adopt the approach proposed by \cite{safari2024TSCT}. For each detected point, an ellipse is formed, with its semi-major axis extending to the boundary of the perception system's detection area. The interior of each ellipse is considered unsafe, and a soft minimum is used to combine all elliptical functions with the perception detection area. The zero-superlevel set of the resulting composite soft-minimum CBF defines a subset of the unknown safe set $\SSS_u(t)$ at time $kT$.

It follows that for all $i\in \{1,2,\ldots, \ell_k \}$, the location of the detected point is 
\begin{equation*}
   c_{i,k} \triangleq q(kT) + r_{i,k} \begin{bmatrix}
        \cos \theta_{i,k} \\ \sin \theta_{i,k}
    \end{bmatrix}.
\end{equation*}
Similarly, for all $i\in \{1,2,\ldots, \ell_k \}$,
\begin{equation*}
   d_{i,k} \triangleq q(kT) + \bar r \begin{bmatrix}
        \cos \theta_{i,k} \\ \sin \theta_{i,k}
    \end{bmatrix}
\end{equation*}
is the location of the point that is at the boundary of the detection radius and on the line between $q(kT)$ and $c_{i,k}$.

Next, for each point $(r_{i,k},\theta_{i,k})$, we consider a function whose zero-level set is an ellipse that encircles $c_{i,k}$ and $d_{i,k}$.
Specifically, for all $i\in \{1,2,\ldots,\ell_k \}$, consider $\sigma_{i,k} \colon \BBR^4 \to \BBR$ defined by 
\begin{align}
    \sigma_{i,k}(x) &\triangleq \left [ \chi(x) - \frac{1}{2} (c_{i,k}+d_{i,k}) \right ]^\rmT R^\rmT_{i,k} P_{i,k} R_{i,k} \nn\\
    &\qquad \times \left [ \chi(x) - \frac{1}{2} (c_{i,k}+d_{i,k}) \right ] - 1, \label{eq:ellipse}
\end{align}
where 
\begin{gather}
    R_{i,k} \triangleq \begin{bmatrix}
        \cos\theta_{i,k} & \sin\theta_{i,k} \\ -\sin\theta_{i,k}& \cos\theta_{i,k}
    \end{bmatrix}, \quad
    P_{i,k} \triangleq \begin{bmatrix}
        a_{i,k}^{-2}&0 \\ 0 & z_{i,k}^{-2}
    \end{bmatrix}, \label{eq:ellipse.2}\\
    a_{i,k} \triangleq \frac{\bar r-r_{i,k}}{2} + \varepsilon_a, \quad 
    z_{i,k} \triangleq \sqrt{a_{i,k}^2 - \left(\frac{\bar r-r_{i,k}}{2}\right)^2}, \label{eq:ellipse.3}
\end{gather}
where $\chi(x) \triangleq [ \, I_2 \quad 0_{2\times2} \, ]^\rmT x$ extracts robot position from the state, and $\varepsilon_a > 0$ determines the size of the ellipse $\sigma_{i,k}(x)=0$ (specifically, larger $\varepsilon_a$ yields a larger ellipse). 
The parameter $\varepsilon_a$ is used to introduce conservativeness that can, for example, address an environment with dynamic obstacles. 
Note that $a_{i,k}$ and $z_{i,k}$ are the lengths of the semi-major and semi-minor axes, respectively. 
The area outside the ellipse is the zero-superlevel set of $\sigma_{i,k}$.

Next, let $\beta_k \colon \BBR^4 \to \BBR$ be a continuously differentiable function whose zero-superlevel set models the perception system's detection area. For example, for a $360^\circ$ FOV with detection radius $\bar r >0$, $\beta_k$ is defined by
\begin{align}\label{eq:circle}
    \beta_k(x) \triangleq  \left(\bar r - \varepsilon_\beta \right)^2- \| \chi(x)-q(kT) \|^2,
\end{align}
which has a zero-superlevel that is a disk with radius $\bar r - \varepsilon_\beta$ centered at the robot position $q(kT)$ at the time of detection, where $\varepsilon_\beta \ge 0$ influences the size of the disk and plays a role similar to $\varepsilon_a$.

We construct the perception feedback function $b_k$ using the soft minimum to compose $\sigma_{1,k},\cdots,\sigma_{\ell_k,k}$ and $\beta_k$.
Specifically, let $\rho>0$ and consider 
\begin{equation} \label{eq:bk_ex1}
    b_k(x) = \begin{cases}
            \mbox{softmin}_{\rho} \left( \beta_k(x), \sigma_{1,k}(x),\ldots,\sigma_{\ell_k,k}(x)\right), & \ell_k > 0,\\
            \beta_k(x), & \ell_k =0.
        \end{cases}
\end{equation}

The control objective is for the robot to move from its initial location to a goal location $q_\rmg = [ \, q_{\rmg,\rmx} \quad q_{\rmg,\rmy} \, ]^\rmT \in\BBR^2$ without violating safety (i.e., hitting an obstacle).
To accomplish this objective, we consider the desired control
\begin{equation*}
    u_\rmd(x)  \triangleq \begin{bmatrix} u_{\rmd_1}(x)\\u_{\rmd_2}(x) \end{bmatrix}, 
\end{equation*}
 where
\begin{align*}
u_{\rmd_1}(x) &\triangleq  -(k_1+k_3) v + (1+k_1k_3)\| q - q_\rmg \| \cos \delta(x)\\
&\qquad + k_1\left ( k_2 \| q - q_\rmg \| +v \right )\sin^2\delta(x),\\
u_{\rmd_2}(x) &\triangleq \left ( k_2+\frac{v}{\| q - q_\rmg \|} \right )\sin\delta(x),\\
\delta(x) &\triangleq\mbox{atan2}(q_\rmy-q_{\rmg,\rmy},q_\rmx-q_{\rmg,\rmx})-\theta + \pi,
\end{align*}
and $k_1,k_2,k_3 > 0$.
The desired control $u_\rmd$ drives $q$ to $q_\rmg$ but does not account for safety and control input constraints.
The desired control is designed using a process similar to \cite[pp.~30--31]{de2002control}.

For this example, the perception update period is $T = 0.2$~s and the gains for the desired control are $k_1 = 0.2$, $k_2 = 1$, and $k_3 = 2$. For the surrogate control, we use control dynamics $\eqref{eq:dynamics_control.a}$, where $A_\rmc =\matls -1 &  0 \\ 0 & -1 \matrs$ and $B_\rmc = -A_\rmc$, which result in low-pass control dynamics. The desired surrogate control is given by \eqref{eq:u_d_hat_def}, where $\sigma = 0.6$. 

We use the perception feedback $b_k$ given by \eqref{eq:bk_ex1}, where $\rho = 30$, $\bar r = 5$~m, and $\varepsilon_a = \varepsilon_\beta = 0.15$~m. 
The maximum number of detected points is $\bar \ell = 100$, and $N = 3$. \Cref{fig:map} shows a map of the unknown environment that the ground robot aims to navigate. We implement the control~\Cref{eq:softmax_h,eq:dynamics_control.a,eq:u_d_hat_def,eq:HOCBF.varphi,eq:softmin h,eq:uclose,eq:ulambda,eq:omegabar} with $\kappa = 30$, $\varepsilon = 10$, $\gamma = 200$, $\alpha_{\psi, 0}(\psi_0) =25\psi_0$, $\alpha_{\psi, 1}(\psi_1) =20\psi_1$, for all $j \in \{1,\ldots,4\}$,  $\alpha_{\xi,j,0}(\xi_j) =15\xi_j$, $\alpha(h) =30h$, and $\eta$ given by Example~\ref{ex:g} where $r=2$ and $\nu = 1.2$. 
For sample-data implementation, the control is updated at $100$~Hz.

\Cref{fig:map} shows the closed-loop trajectories for $\tilde x_0 = [\,-1\quad -8\quad0\quad\frac{\pi}{2}\quad 0 \quad 0\,]^\rmT$ with 3 goal locations: $q_\rmg = [\,6\quad2.5\,]^\rmT$~m, $q_\rmg = [\,-7\quad-1.5\,]^\rmT$~m, and $q_\rmg = [\,-5\quad7\,]^\rmT$~m. 
In all cases, the robot converges to the goal while satisfying safety and control input constraints. 
\Cref{fig:control_signal,fig:h_vals} show time histories of the relevant signals for the case where $q_\rmg = [\,6\quad2.5\,]^\rmT$~m. 
\Cref{fig:h_vals} shows $h$, $\psi_0$, $\min \xi_j$, and $\min \phi_j$ are positive for all time, which demonstrates that for all time $t$, x remains in $\SSS_\rms(t)$ and $u \in \SU$.   
\Cref{fig:control_signal} shows $v$ deviates from $v_\rmd$ in order to satisfy safety. 

\begin{figure}[ht!]
\center{\includegraphics[width=0.46\textwidth,clip=true,trim= 0.1in 0.3in 0.1in 0.7in] {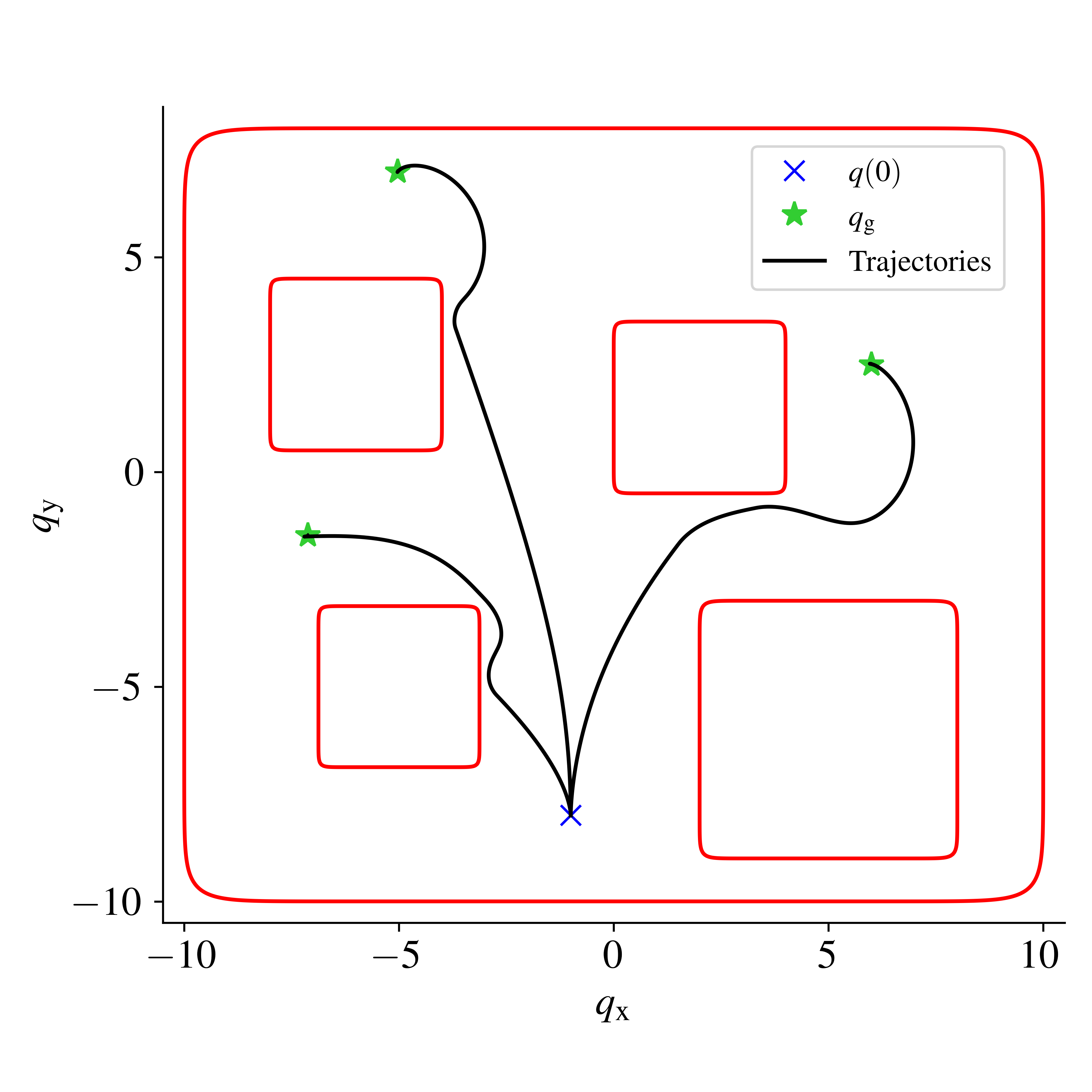}}
\caption{Three closed-loop trajectories using the control~\Cref{eq:softmax_h,eq:dynamics_control.a,eq:u_d_hat_def,eq:HOCBF.varphi,eq:softmin h,eq:uclose,eq:ulambda,eq:omegabar} with the perception feedback $b_k$ generated from $360^{\circ}$ FOV perception in a static environment.}
\label{fig:map}
\end{figure} 

\begin{figure}[ht!]
\center{\includegraphics[width=0.46\textwidth,clip=true,trim= 0.15in 0.1in 0.1in 0.15in] {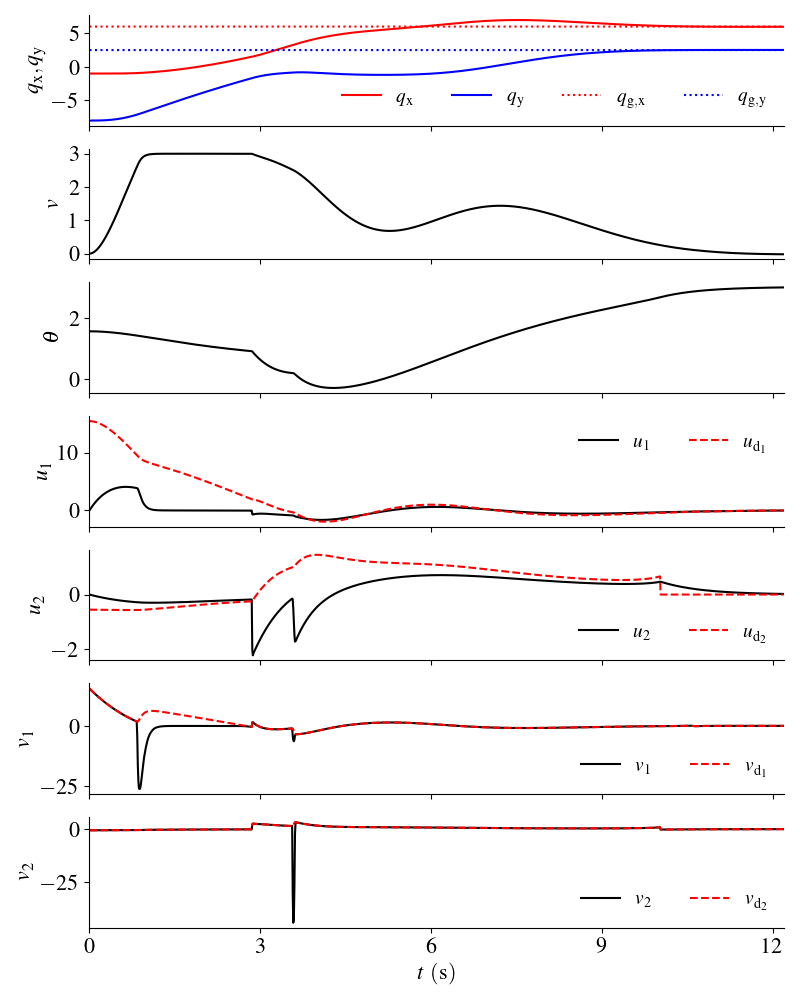}}
\caption{$q_\rmx$, $q_\rmy$, $v$, $\theta$, $u$, $u_\rmd$, $v =    [ \, v_1 \quad
    v_2 \,]^\rmT$, and $v_\rmd =    [ \, v_{\rmd_1} \quad
    v_{\rmd_2} \,]^\rmT$ for $q_{\rmg}=[\,6 \quad 2.5\,]^\rmT$.}
\label{fig:control_signal}
\end{figure} 

\begin{figure}[ht!]
\center{\includegraphics[width=0.46\textwidth,clip=true,trim= 0.15in 0.1in 0.1in 0.15in] {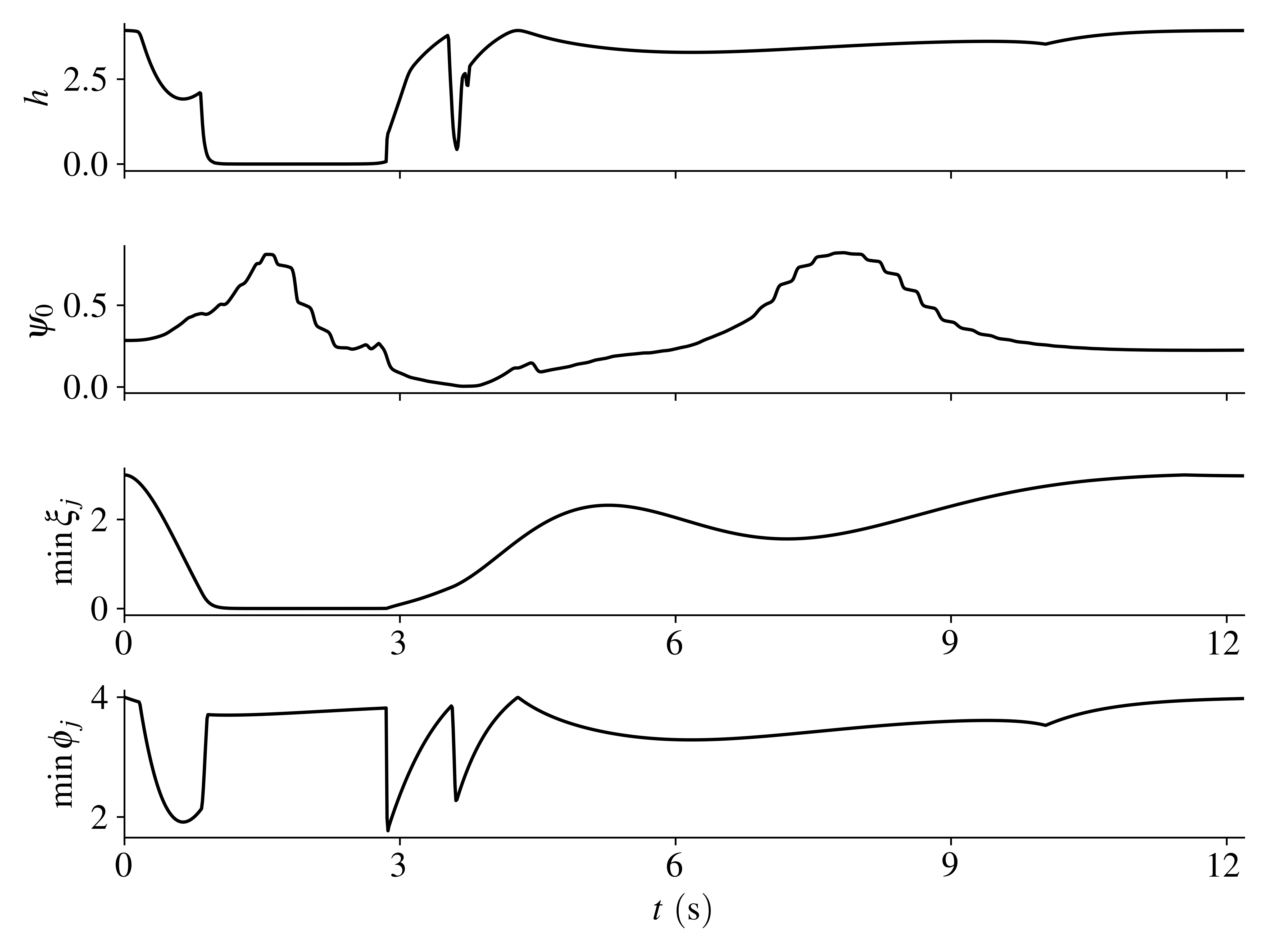}}
\caption{$h$, $\psi_0$, $\min \xi_j$, and $\min \phi_j$ for $q_\rmg = [\,6\quad2.5\,]^\rmT$~m.}
\label{fig:h_vals}
\end{figure}

\bibliographystyle{ieeetr}
\bibliography{LiDAR_StateComp}

\end{document}